\documentclass[journal]{IEEEtran}

\ifCLASSINFOpdf
\else
   \usepackage[dvips]{graphicx}
\fi
\usepackage{url}

\usepackage{amsmath,amssymb,amsthm}
\usepackage{enumerate}
\DeclareMathOperator{\spanop}{span}
\DeclareMathOperator{\fold}{fold}

\DeclareMathOperator{\rank}{rank}
\DeclareMathOperator{\tr}{tr}

\newtheorem{theorem}{Theorem}[section]
\newtheorem{proposition}[theorem]{Proposition}
\newtheorem{lemma}[theorem]{Lemma}

\newtheorem{remark}{Remark}

\usepackage{graphicx}

\begin{document}

\title{Structural Loss Metrics for Tensor Approximation via Matrix Low-Rank Approximation}

\author{Hiroki Hasegawa
\thanks{H. Hasegawa is with the Graduate School of Science and Technology, University of Tsukuba, Tsukuba, 305-8573, Japan (e-mail: hasegawa.hiroki.tkb\_en@u.tsukuba.ac.jp).}}

\markboth{IEEE Signal Processing Letters,~Vol.~XX, No.~X, July~2026}%
{Hasegawa: Quantifying Tensor Structure Loss}

\maketitle

\begin{abstract}
Matricized low-rank approximation via SVD is a standard surrogate for tensor decompositions, but entry-wise reconstruction error fails to capture multiway geometric degradation. Under an orthogonal Tucker model, we characterize this degradation using two metrics: cross-mode Direction Loss, measuring geometric subspace deviation from rank truncation and noise rotation, and Interaction Loss, quantifying multilinear interaction distortion in the core tensor. We prove that squared relative reconstruction error orthogonally decomposes into interaction loss and out-of-subspace energy, and derive a Wedin-type bound establishing the stability of a plug-in Direction Loss estimator. Experiments on synthetic and hyperspectral datasets demonstrate that nearly identical reconstruction errors can yield markedly different structural-loss profiles; hyperspectral patches with comparable reconstruction errors exhibit up to a 4.6-fold difference in Direction Loss, correlating with severe visual blurring.
\end{abstract}

\begin{IEEEkeywords}
Tensor decompositions, low-rank approximation, subspace perturbation, error analysis, structural metrics.
\end{IEEEkeywords}

\IEEEpeerreviewmaketitle

\section{Introduction}

\IEEEPARstart{T}{ensors} preserve the intrinsic multiway correlations of multidimensional data by avoiding structure-destroying vectorization \cite{Sidiropoulos2017TensorSP}. While low-rank tensor approximation retains this underlying geometry \cite{Grasedyck2013}, finding the optimal low-rank approximation under general multilinear rank constraints is NP-hard \cite{HillarLim2013NP}. Consequently, matricized low-rank approximation via singular value decomposition (SVD) serves as a standard, computationally tractable surrogate by flatting the tensor and leveraging the global optimality of matrix rank truncation \cite{DeLathauwer2000BestRank}\cite{Kolda2009}. 

Although tensor matricization is reversible and lossless, imposing matrix low-rank constraints on unfoldings introduces multiway structural degradation \cite{Kolda2006}. Existing evaluations rely almost exclusively on entry-wise reconstruction errors (e.g., Frobenius norm) \cite{Grasedyck2013}. However, reconstruction error only measures global energy loss and cannot resolve how structural information is partitioned between mode-wise subspaces and core interactions. Distinct matricized approximations can yield virtually identical reconstruction errors while suffering vastly different geometric subspace deviations and core distortions, yet no principled diagnostic framework currently exists to quantify this degradation.

To address this limitation, we introduce a dual-metric diagnostic framework comprising Direction Loss and Interaction Loss. Direction Loss measures cross-mode subspace directional deviation caused by rank truncation and noise rotation, while Interaction Loss quantifies multilinear coupling distortions within the reference product subspace. Furthermore, we establish an orthogonal error decomposition that partitions relative reconstruction error into interaction loss and a complementary off-subspace term, and derive a Wedin-type perturbation bound establishing the local stability of the Direction Loss estimator. Experiments on synthetic and hyperspectral datasets demonstrate that approximations with matching reconstruction errors exhibit up to a 4.6-fold variation in Direction Loss, directly correlating with visual degradation and edge blurring.

\section{Preliminaries and Matricized Rank Truncation}

Detailed mathematical proofs for all propositions and theorems in this letter are compiled in the supplementary material.

Let $\mathcal{X} \in \mathbb{R}^{I_1 \times \cdots \times I_K}$ be a $K$-way tensor. Matrix singular values are sorted in nonincreasing order and padded with zeros. Mode-$k$ matricization (unfolding) is denoted by $X_{(k)} \in \mathbb{R}^{I_k \times I_{-k}}$ with $I_{-k} = \prod_{j \neq k} I_j$, and $\fold_k(\cdot)$ is the inverse. For rank $s$, the mode-$k$ matricized rank-$s$ truncation operator is $\mathcal F_{k,s}(\mathcal X) := \fold_k \{ \Pi_s(X_{(k)}) \}$. The clean reference tensor $\mathcal{X}^\star \in \mathbb{R}^{I_1 \times \cdots \times I_K}$ is assumed to admit an exact orthogonal Tucker representation \cite{DeLathauwer2000HOSVD}:
\begin{equation}
\label{eq:tucker}
\mathcal{X}^\star = \mathcal{G}^\star \times_1 U_1^\star \times_2 \cdots \times_K U_K^\star,
\end{equation}
where $U_j^\star \in \mathbb{R}^{I_j \times r_j}$ is the orthonormal factor matrix for the mode-$j$ subspace $\mathcal{U}_j^\star = \spanop(U_j^\star)$ with $r_j = \rank(X^\star_{(j)})$, $\mathcal{G}^\star \in \mathbb{R}^{r_1 \times \cdots \times r_K}$ is the core tensor, and $P_j^\star = U_j^\star U_j^{\star\top}$ is the orthogonal projector onto $\mathcal{U}_j^\star$.

\begin{proposition}[Core Truncation Representation]
\label{prop:rank_truncation}
Let $\Pi_s(A)$ denote the rank-$s$ truncated SVD of $A$ for $1 \le s \le r_k$. Under \eqref{eq:tucker}, if $\sigma_s(X^\star_{(k)}) > \sigma_{s+1}(X^\star_{(k)})$ (so that $\Pi_s(X^\star_{(k)})$ is uniquely defined), the mode-$k$ truncated tensor $\mathcal{F}_{k,s}(\mathcal{X}^\star)$ is represented as
\begin{equation}
\label{eq:truncated_tucker}
\mathcal{F}_{k,s}(\mathcal{X}^\star) = \widetilde{\mathcal{G}}^{(k,s)} \times_1 U_1^\star \times_2 \cdots \times_K U_K^\star,
\end{equation}
where $\widetilde{\mathcal{G}}^{(k,s)} = \fold_k^{(r_1,\ldots,r_K)} \{ \Pi_s(G^\star_{(k)}) \} \in \mathbb{R}^{r_1 \times \cdots \times r_K}$.
\end{proposition}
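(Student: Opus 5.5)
The plan is to write the mode-$k$ unfolding of the Tucker model in matrix form and then use the invariance of truncated SVD under multiplication by matrices with orthonormal columns. By the standard unfolding identity, \eqref{eq:tucker} gives
\begin{equation*}
X^\star_{(k)} = U_k^\star G^\star_{(k)} W_k^\top, \qquad W_k := U_K^\star \otimes \cdots \otimes U_{k+1}^\star \otimes U_{k-1}^\star \otimes \cdots \otimes U_1^\star,
\end{equation*}
with the Kronecker ordering matching the column ordering of the unfolding. Since each $U_j^\star$ has orthonormal columns, so does $W_k$, because $W_k^\top W_k = \bigotimes_{j\neq k} U_j^{\star\top}U_j^\star = I$.

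Next, take a thin SVD $G^\star_{(k)} = A\Sigma B^\top$. Then $X^\star_{(k)} = (U_k^\star A)\,\Sigma\,(W_k B)^\top$, and $U_k^\star A$ and $W_k B$ again have orthonormal columns. This expression is therefore a valid SVD of $X^\star_{(k)}$: it has the same singular values as $G^\star_{(k)}$, and its singular vectors are mapped by $U_k^\star$ and $W_k$. Under the gap assumption $\sigma_s>\sigma_{s+1}$, $\Pi_s$ is well defined for both matrices, since their spectra coincide. Truncating the top-$s$ components gives
\begin{equation*}
\Pi_s(X^\star_{(k)}) = U_k^\star A_s\Sigma_s B_s^\top W_k^\top = U_k^\star\, \Pi_s(G^\star_{(k)})\, W_k^\top .
\end{equation*}
If one wants to avoid choosing a particular SVD, the same conclusion follows from the projector form: $\Pi_s(M)$ is the orthogonal projection of $M$ onto its top-$s$ left singular subspace, and that subspace for $X^\star_{(k)}$ is $U_k^\star$ applied to the one for $G^\star_{(k)}$. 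The gap makes this subspace unique.

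Finally, fold back. Let $\widetilde{\mathcal G}^{(k,s)} = \fold_k^{(r_1,\ldots,r_K)}\{\Pi_s(G^\star_{(k)})\}$, so that $\widetilde G^{(k,s)}_{(k)} = \Pi_s(G^\star_{(k)})$. Applying the unfolding identity in reverse to the tensor $\widetilde{\mathcal G}^{(k,s)}\times_1 U_1^\star\cdots\times_K U_K^\star$ shows that its mode-$k$ unfolding is $U_k^\star\Pi_s(G^\star_{(k)})W_k^\top$. This equals $\Pi_s(X^\star_{(k)})$, and since $\fold_k$ is a bijection, \eqref{eq:truncated_tucker} follows.

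The only step needing care is the middle one, namely the justification that truncation commutes with the isometric embeddings. The subtle points there are uniqueness, which is handled by the gap hypothesis, and the restriction $s\le r_k$, which guarantees that $G^\star_{(k)}$ has at least $s$ nonzero singular values in a compatible sense. The remaining steps are routine bookkeeping of the Kronecker ordering.
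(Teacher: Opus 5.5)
Your proposal is correct and follows essentially the same route as the paper. Both write $X^\star_{(k)} = U_k^\star G^\star_{(k)} U_{-k}^{\star\top}$ with a column-orthonormal Kronecker factor, lift a compact SVD of $G^\star_{(k)}$ to a valid SVD of $X^\star_{(k)}$, take the leading $s$ components under the gap hypothesis, and refold; your projector-form remark is just an SVD-choice-free way of justifying the same middle step.
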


Proposition~\ref{prop:rank_truncation} shows that under a clean Tucker model, mode-$k$ SVD truncation does not rotate factor bases directly, but truncates the mode-$k$ unfolding of the core tensor $\mathcal{G}^\star$ within the fixed reference coordinate system.

\section{Structural Consequences of Matricized Rank Truncation}

Matricized rank truncation on mode $k$ induces cross-mode subspace contraction across non-truncated modes $j \neq k$ and core interaction distortion.

\subsection{Cross-Mode Subspace Inclusion}

We first state the algebraic inclusion governing non-truncated modes under clean matricized rank truncation.

\begin{proposition}[Cross-Mode Subspace Inclusion]
\label{prop:subspace_inclusion}
Under the assumptions of Proposition~\ref{prop:rank_truncation}, for any non-truncated mode $j \neq k$, the column space of the mode-$j$ unfolding of $\mathcal F_{k,s}(\mathcal X^\star)$ satisfies
\begin{equation}
\label{eq:subspace_inclusion}
\mathcal R\left( \left(\mathcal F_{k,s}(\mathcal X^\star)\right)_{(j)} \right) \subseteq \mathcal R\left( X^\star_{(j)} \right).
\end{equation}
Consequently, the mode-$j$ orthogonal projection matrix $\widetilde{P}_j^{(k,s)}$ onto $\mathcal R\bigl( (\mathcal F_{k,s}(\mathcal X^\star))_{(j)} \bigr)$ satisfies $\mathcal R(\widetilde{P}_j^{(k,s)}) \subseteq \mathcal R(P_j^\star)$.
\end{proposition}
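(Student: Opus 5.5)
The plan is to use the Tucker representation supplied by Proposition~\ref{prop:rank_truncation}, so that $\mathcal F_{k,s}(\mathcal X^\star)$ carries the same factor matrices $U_1^\star,\dots,U_K^\star$ as $\mathcal X^\star$, with only the core changed to $\widetilde{\mathcal G}^{(k,s)}$. The inclusion then follows from the standard unfolding identity for Tucker products. For any $j$,
\begin{equation*}
\bigl(\mathcal F_{k,s}(\mathcal X^\star)\bigr)_{(j)} = U_j^\star \, \widetilde G^{(k,s)}_{(j)} \bigl( U_K^\star \otimes \cdots \otimes U_{j+1}^\star \otimes U_{j-1}^\star \otimes \cdots \otimes U_1^\star \bigr)^{\top},
\end{equation*}
where $\widetilde G^{(k,s)}_{(j)}$ is the mode-$j$ unfolding of the $r_1\times\cdots\times r_K$ core. (The Kronecker ordering follows whichever unfolding convention is in force; it plays no role in the argument.)

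From this identity, every column of the mode-$j$ unfolding is $U_j^\star$ times some vector. Hence $\mathcal R\bigl((\mathcal F_{k,s}(\mathcal X^\star))_{(j)}\bigr)\subseteq \mathcal R(U_j^\star)=\mathcal U_j^\star$.

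The remaining step is to identify $\mathcal U_j^\star$ with $\mathcal R(X^\star_{(j)})$. Applying the same identity to $\mathcal X^\star$ gives $X^\star_{(j)} = U_j^\star G^\star_{(j)} W_j^\top$, where $W_j$ is the Kronecker product of the other factors. So $\mathcal R(X^\star_{(j)})\subseteq \mathcal R(U_j^\star)$. The paper assumes $r_j=\rank(X^\star_{(j)})$, and $U_j^\star$ has exactly $r_j$ orthonormal columns, so both spaces have dimension $r_j$. Equal dimension plus inclusion gives equality, $\mathcal R(X^\star_{(j)})=\mathcal U_j^\star=\mathcal R(P_j^\star)$, and \eqref{eq:subspace_inclusion} follows.

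For the projector statement, $\widetilde P_j^{(k,s)}$ is by definition the orthogonal projector onto $\mathcal R\bigl((\mathcal F_{k,s}(\mathcal X^\star))_{(j)}\bigr)$, so its range is that subspace. The inclusion just shown then gives $\mathcal R(\widetilde P_j^{(k,s)})\subseteq\mathcal R(P_j^\star)$. Equivalently, $P_j^\star\widetilde P_j^{(k,s)}=\widetilde P_j^{(k,s)}$.

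The only step needing care is the unfolding identity with the Kronecker factors. Its one substantive use is that the $U_i^\star$ multiply on the right and so cannot enlarge the column space; this holds regardless of the ordering convention. Note also that the argument never uses $j\neq k$. For $j=k$ the inclusion holds as well, trivially, because $\Pi_s$ shrinks the column space, so the hypothesis $j\ne k$ is only for emphasis. Finally, the spectral-gap hypothesis is needed only to make $\mathcal F_{k,s}$ well defined through Proposition~\ref{prop:rank_truncation}.
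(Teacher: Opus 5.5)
Your proof is correct, but its main line differs from the paper's; it is exactly the alternative the paper records in the remark following its proof. The paper's proof never invokes Proposition~\ref{prop:rank_truncation}. It writes the truncation as $\mathcal F_{k,s}(\mathcal X^\star) = \mathcal X^\star \times_k P_{k,s}$, where $P_{k,s}$ is the projector onto the leading $s$ left singular subspace of $X^\star_{(k)}$. This gives $(\mathcal F_{k,s}(\mathcal X^\star))_{(j)} = X^\star_{(j)} M_{j,k}^\top$ for a Kronecker matrix $M_{j,k}$ built from $P_{k,s}$ and identities. The inclusion $\mathcal R\bigl((\mathcal F_{k,s}(\mathcal X^\star))_{(j)}\bigr)\subseteq\mathcal R(X^\star_{(j)})$ then follows from $\mathcal R(AB)\subseteq\mathcal R(A)$. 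That argument lands directly on $\mathcal R(X^\star_{(j)})$ and holds for an arbitrary tensor, with no Tucker structure needed. For the projector statement, however, the paper simply takes $P_j^\star$ to be the projector onto $\mathcal R(X^\star_{(j)})$. This silently uses the identification $\mathcal R(X^\star_{(j)})=\mathcal R(U_j^\star)$, which is needed to match the main text's definition $P_j^\star = U_j^\star U_j^{\star\top}$. Your route goes the other way. Proposition~\ref{prop:rank_truncation} places the column space inside $\mathcal R(U_j^\star)=\mathcal R(P_j^\star)$ immediately, and your inclusion-plus-dimension argument (using $r_j=\rank(X^\star_{(j)})$) makes that identification explicit. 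So the paper's proof gains generality for the first inclusion, while yours relies on the exact Tucker model but handles the link to $P_j^\star$ more transparently. Your side observations are also correct: the hypothesis $j\neq k$ is not needed, and the gap assumption only serves well-definedness.
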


Proposition~\ref{prop:subspace_inclusion} reveals that without noise, matricized rank truncation induces pure cross-mode subspace contraction within $\mathcal R(P_j^\star)$ without out-of-subspace rotation.

\subsection{Cross-Mode Direction Loss}

To quantify the geometric alterations propagated to non-truncated modes $j \neq k$, we omit the directly truncated mode $k$ (whose rank is explicitly fixed to $s$) and define the cross-mode Direction Loss as the average normalized projection distance:
\begin{equation}
\label{eq:dir_loss}
L_{\mathrm{dir}}^{(k,s)} = \frac{1}{K-1} \sum_{j \neq k} d_j^{(k,s)}, \quad d_j^{(k,s)} = \frac{\| P_j^\star - \widetilde{P}_j^{(k,s)} \|_F^2}{2r_j}.
\end{equation}

\subsection{Rank--Angle Decomposition}

To resolve the geometric mechanisms of $d_j^{(k,s)}$, we decompose projection distance into rank mismatch and subspace rotation.

\begin{proposition}[Rank--Angle Decomposition]
\label{prop:rank_angle_decomp}
For orthogonal projectors $P$ of rank $r \ge 1$ and $\widetilde{P}$ of rank $\widetilde{r} \ge 0$, let $m = \min(r, \widetilde{r})$ and let $\theta_1, \ldots, \theta_m$ be the principal angles between $\mathcal{R}(P)$ and $\mathcal{R}(\widetilde{P})$. The normalized projection distance decomposes as
\begin{equation}
\label{eq:dir_decomposition}
\frac{\| P - \widetilde{P} \|_F^2}{2r} = d_{\mathrm{rank}} + d_{\mathrm{rot}} = \frac{| r - \widetilde{r} |}{2r} + \frac{1}{r} \sum_{i=1}^{m} \sin^2 \theta_i.
\end{equation}
Applying this to $d_j^{(k,s)}$ yields $d_j^{(k,s)} = d_{\mathrm{rank},j}^{(k,s)} + d_{\mathrm{rot},j}^{(k,s)}$, where $d_{\mathrm{rank},j}^{(k,s)} = |r_j - \widetilde{r}_{j\mid k,s}| / (2r_j)$ represents the rank-mismatch term and $d_{\mathrm{rot},j}^{(k,s)}$ is the orientation/rotation term.
\end{proposition}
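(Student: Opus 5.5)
We expand the squared Frobenius norm with the trace inner product and then evaluate the cross term through principal angles. Since $P$ and $\widetilde{P}$ are symmetric idempotents, $\|P\|_F^2 = \tr(P) = r$ and $\|\widetilde{P}\|_F^2 = \widetilde{r}$. Hence
\begin{equation*}
\|P - \widetilde{P}\|_F^2 = r + \widetilde{r} - 2\,\tr(P\widetilde{P}).
\end{equation*}
Everything therefore reduces to showing $\tr(P\widetilde{P}) = \sum_{i=1}^{m} \cos^2\theta_i$.

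To compute the cross term, we write $P = UU^\top$ and $\widetilde{P} = \widetilde{U}\widetilde{U}^\top$ with $U \in \mathbb{R}^{n\times r}$ and $\widetilde{U} \in \mathbb{R}^{n \times \widetilde{r}}$ orthonormal. Then $\tr(P\widetilde{P}) = \|U^\top \widetilde{U}\|_F^2$. By definition, the singular values of $U^\top\widetilde{U}$ are $\cos\theta_1,\ldots,\cos\theta_m$, where $m = \min(r,\widetilde{r})$. This gives $\tr(P\widetilde{P}) = \sum_{i=1}^m \cos^2\theta_i$. The case $\widetilde{r}=0$ is trivial: $m=0$, the sum is empty, and the distance equals $r$.

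The remaining step is bookkeeping. Substituting $\cos^2\theta_i = 1 - \sin^2\theta_i$ gives
\begin{equation*}
\|P - \widetilde{P}\|_F^2 = r + \widetilde{r} - 2m + 2\sum_{i=1}^m \sin^2\theta_i .
\end{equation*}
Since $r + \widetilde{r} - 2\min(r,\widetilde{r}) = |r - \widetilde{r}|$, dividing by $2r$ yields \eqref{eq:dir_decomposition}. The specialization to $d_j^{(k,s)}$ follows immediately. We take $P = P_j^\star$, which has rank $r_j \ge 1$ by \eqref{eq:tucker}, and $\widetilde{P} = \widetilde{P}_j^{(k,s)}$, whose rank we denote $\widetilde{r}_{j\mid k,s}$. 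The decomposition then holds with the stated rank-mismatch and rotation terms.

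The main point needing care is the identification of the singular values of $U^\top\widetilde{U}$ with the cosines of the principal angles when the ranks differ. We will invoke the standard SVD-based (Bj\"orck--Golub) definition, which yields exactly $m$ angles. It also gives basis independence, since $\|U^\top\widetilde{U}\|_F$ is invariant under orthogonal changes of basis within each subspace.
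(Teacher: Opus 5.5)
Your proof is correct and follows essentially the same route as the paper: the trace expansion $\|P-\widetilde{P}\|_F^2 = r + \widetilde{r} - 2\tr(P\widetilde{P})$, the identification $\tr(P\widetilde{P}) = \|U^\top \widetilde{U}\|_F^2 = \sum_{i=1}^{m}\cos^2\theta_i$ via the singular values of the cross-basis matrix, and the identity $r + \widetilde{r} - 2m = |r-\widetilde{r}|$. Your explicit treatment of the case $\widetilde{r} = 0$ and your remark on basis independence are small additions that the paper leaves implicit.
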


Generally, $d_{\mathrm{rank},j}^{(k,s)}$ measures rank mismatch. Under exact noise-free settings where $\widetilde{r}_{j\mid k,s} \le r_j$, it quantifies pure rank contraction.

\subsection{Noise-Free Specialization and Subspace Coverage}

Applying Proposition~\ref{prop:subspace_inclusion} to the rank--angle decomposition simplifies $d_j^{(k,s)}$ under clean observations.

\begin{proposition}[Noise-Free Specialization]
\label{prop:precision_one}
For a clean reference tensor $\mathcal{X}^\star$ with exact orthogonal Tucker representation, the inclusion $\mathcal{R}(\widetilde{P}_j^{(k,s)}) \subseteq \mathcal{R}(P_j^\star)$ guaranteed by Proposition~\ref{prop:subspace_inclusion} implies $\theta_i = 0$ for all $i$ and hence $d_{\mathrm{rot},j}^{(k,s)} = 0$. Thus, $d_j^{(k,s)}$ reduces to pure rank contraction:
\begin{equation}
\label{eq:dir_dist_contraction}
d_j^{(k,s)} = \frac{r_j - \widetilde{r}_{j\mid k,s}}{2r_j}.
\end{equation}
Defining subspace Coverage and Precision as
\begin{equation}
\label{eq:dir_metrics}
R_{j\mid k,s}^{\mathrm{coverage}} = \frac{\tr(P_j^\star \widetilde{P}_j^{(k,s)})}{r_j}, \quad R_{j\mid k,s}^{\mathrm{precision}} = \frac{\tr(P_j^\star \widetilde{P}_j^{(k,s)})}{\widetilde{r}_{j\mid k,s}},
\end{equation}
we have $R_{j\mid k,s}^{\mathrm{precision}} = 1$ identically (when $\widetilde{r}_{j\mid k,s} > 0$), and $d_j^{(k,s)}$ simplifies to
\begin{equation}
\label{eq:dir_dist_coverage}
d_j^{(k,s)} = \frac{1}{2} \left( 1 - R_{j\mid k,s}^{\mathrm{coverage}} \right).
\end{equation}
\end{proposition}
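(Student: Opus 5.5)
The plan is to deduce the statement directly from Proposition~\ref{prop:subspace_inclusion} and Proposition~\ref{prop:rank_angle_decomp}; no new tensor structure is needed. Write $P = P_j^\star$ with rank $r = r_j$, and $\widetilde P = \widetilde P_j^{(k,s)}$ with rank $\widetilde r = \widetilde r_{j\mid k,s}$.

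\textbf{Step 1 (angles and rotation term).} Proposition~\ref{prop:subspace_inclusion} gives $\mathcal R(\widetilde P) \subseteq \mathcal R(P)$. It follows that $\widetilde r \le r$, so $m = \widetilde r$. Every unit vector $v \in \mathcal R(\widetilde P)$ satisfies $Pv = v$, so its angle to $\mathcal R(P)$ is zero. To make this precise, I would use the variational characterization of principal angles: the cosines $\cos\theta_i$ are the singular values of $U^\top \widetilde U$, where $U$ and $\widetilde U$ are orthonormal bases of the two ranges. Because $\widetilde U = U(U^\top \widetilde U)$, we have $(U^\top\widetilde U)^\top (U^\top \widetilde U) = \widetilde U^\top \widetilde U = I_{\widetilde r}$. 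All $m = \widetilde r$ singular values therefore equal $1$, so $\theta_i = 0$ and $d_{\mathrm{rot},j}^{(k,s)} = 0$. Substituting into \eqref{eq:dir_decomposition}, and using $|r - \widetilde r| = r - \widetilde r$, gives \eqref{eq:dir_dist_contraction}.

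\textbf{Step 2 (precision and coverage).} The inclusion gives $P\widetilde P = \widetilde P$, so $\tr(P\widetilde P) = \tr(\widetilde P) = \widetilde r$. Hence $R^{\mathrm{precision}} = 1$ whenever $\widetilde r > 0$, and $R^{\mathrm{coverage}} = \widetilde r / r$. Then $\tfrac12(1 - R^{\mathrm{coverage}}) = (r - \widetilde r)/(2r)$, which matches \eqref{eq:dir_dist_contraction} and yields \eqref{eq:dir_dist_coverage}. As a consistency check, I would also verify this identity directly, without going through the angles: $\|P - \widetilde P\|_F^2 = r + \widetilde r - 2\tr(P\widetilde P) = r - \widetilde r$.

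\textbf{Edge case and main obstacle.} The only delicate point is the degenerate case $\widetilde r = 0$. There $m = 0$, the rotation sum is empty, precision is undefined, and coverage is $0$; the formulas still hold, giving $d_j = 1/2$. I would also note that the hypotheses of Proposition~\ref{prop:rank_truncation} (the spectral gap) are inherited, so that $\widetilde P$ is well defined. Beyond that, the argument is routine, and the main care needed is in stating the principal-angle convention consistently with Proposition~\ref{prop:rank_angle_decomp}.
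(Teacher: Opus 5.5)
Your proposal is correct and takes essentially the same route as the paper's proof. From the inclusion of Proposition~\ref{prop:subspace_inclusion} you get $P_j^\star \widetilde P_j^{(k,s)} = \widetilde P_j^{(k,s)}$, hence $\tr(P_j^\star \widetilde P_j^{(k,s)}) = \widetilde r_{j\mid k,s}$ (giving precision and coverage); you show $U_j^{\star\top}\widetilde U_j$ has orthonormal columns so all principal angles vanish; and you conclude via Proposition~\ref{prop:rank_angle_decomp}. Your direct check $\|P-\widetilde P\|_F^2 = r-\widetilde r$ and your $\widetilde r = 0$ discussion are harmless extras, and the latter cannot actually arise here, since $s \ge 1$ forces $\mathcal F_{k,s}(\mathcal X^\star) \neq 0$ and hence $\widetilde r_{j\mid k,s} \ge 1$.
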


Coverage and Precision thus represent auxiliary geometric interpretations of Direction Loss rather than isolated heuristic metrics. Computing these metrics is efficient ($O(I_j r_j \widetilde{r}_{j\mid k,s})$ operations) by reusing SVD factors from the truncation step.

\section{Interaction Loss and Reconstruction-Error Decomposition}

We now examine core interaction distortion. Let $\mathcal{S}^\star := \mathcal{U}_1^\star \otimes \cdots \otimes \mathcal{U}_K^\star$ be the reference product subspace, and let $\mathcal{P}^\star$ be the orthogonal tensor projector onto $\mathcal{S}^\star$, defined by $\mathcal{P}^\star(\mathcal{X}) = \mathcal{X} \times_1 P_1^\star \times_2 \cdots \times_K P_K^\star$. Any approximating tensor $\widetilde{\mathcal{X}}$ decomposes orthogonally into components inside and outside $\mathcal{S}^\star$.

\begin{proposition}[Orthogonal Reconstruction Error Decomposition]
\label{prop:decomposition}
Let $\mathcal{X}^\star \ne 0$ be a reference tensor with product subspace $\mathcal{S}^\star$. For any approximating tensor $\widetilde{\mathcal{X}}$, define the Interaction Loss as $L_{\mathrm{int}}(\widetilde{\mathcal{X}}) := \frac{\| \mathcal{P}^\star(\mathcal{X}^\star - \widetilde{\mathcal{X}}) \|_F^2}{\| \mathcal{X}^\star \|_F^2}$.
\begin{enumerate}[1)]
\item If $\mathcal{X}^\star \in \mathcal{S}^\star$ (exact Tucker model), the squared relative reconstruction error orthogonally decomposes into
\begin{equation}
\label{eq:decomposition_exact}
\frac{\| \mathcal{X}^\star - \widetilde{\mathcal{X}} \|_F^2}{\| \mathcal{X}^\star \|_F^2} = L_{\mathrm{int}}(\widetilde{\mathcal{X}}) + L_{\mathrm{out}}(\widetilde{\mathcal{X}}),
\end{equation}
where $L_{\mathrm{out}}(\widetilde{\mathcal{X}}) = \frac{\| (I - \mathcal{P}^\star)\widetilde{\mathcal{X}} \|_F^2}{\| \mathcal{X}^\star \|_F^2}$ is the out-of-subspace energy; moreover, in this case $L_{\mathrm{int}}(\widetilde{\mathcal{X}}) = \frac{\| \mathcal{G}^\star - \widetilde{\mathcal{G}}_\star \|_F^2}{\| \mathcal{G}^\star \|_F^2}$ with $\widetilde{\mathcal{G}}_\star = \widetilde{\mathcal{X}} \times_1 U_1^{\star\top} \cdots \times_K U_K^{\star\top}$.
\item If $\mathcal{X}^\star \notin \mathcal{S}^\star$ (approximate Tucker model), writing $\mathcal{X}^\star = \mathcal{P}^\star\mathcal{X}^\star + \mathcal{R}^\star$ with residual $\mathcal{R}^\star = (I - \mathcal{P}^\star)\mathcal{X}^\star$, the error splits orthogonally as:
\begin{equation}
\label{eq:decomposition_approx}
\frac{\| \mathcal{X}^\star - \widetilde{\mathcal{X}} \|_F^2}{\| \mathcal{X}^\star \|_F^2} = L_{\mathrm{int}}(\widetilde{\mathcal{X}}) + L_{\mathrm{off}}(\widetilde{\mathcal{X}}),
\end{equation}
where $L_{\mathrm{off}}(\widetilde{\mathcal{X}}) = \frac{\| \mathcal{R}^\star - (I - \mathcal{P}^\star)\widetilde{\mathcal{X}} \|_F^2}{\| \mathcal{X}^\star \|_F^2}$; the core-tensor representation of $L_{\mathrm{int}}$ in 1) does not hold in this case, since $\| \mathcal{X}^\star \|_F \ne \| \mathcal{G}^\star \|_F$.
\end{enumerate}
\end{proposition}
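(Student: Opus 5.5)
The plan is to obtain both parts from one Pythagorean identity, the orthogonal splitting induced by $\mathcal{P}^\star$. First I would check that $\mathcal{P}^\star$ is an orthogonal projector on $\mathbb{R}^{I_1\times\cdots\times I_K}$ with the Frobenius inner product. Under vectorization, $\mathcal{P}^\star$ acts as the Kronecker product $P_K^\star\otimes\cdots\otimes P_1^\star$ (in the paper's ordering convention). A Kronecker product of symmetric idempotents is again symmetric and idempotent, so $\mathcal{P}^\star$ is self-adjoint and idempotent. Its range is exactly $\mathcal{S}^\star$, so $I-\mathcal{P}^\star$ is the complementary orthogonal projector.

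For any tensor $\mathcal{E}$ this gives $\|\mathcal{E}\|_F^2=\|\mathcal{P}^\star\mathcal{E}\|_F^2+\|(I-\mathcal{P}^\star)\mathcal{E}\|_F^2$. Apply it to $\mathcal{E}=\mathcal{X}^\star-\widetilde{\mathcal{X}}$ and divide by $\|\mathcal{X}^\star\|_F^2\neq 0$. The first term is $L_{\mathrm{int}}$ by definition. The second term is $\|(I-\mathcal{P}^\star)\mathcal{X}^\star-(I-\mathcal{P}^\star)\widetilde{\mathcal{X}}\|_F^2=\|\mathcal{R}^\star-(I-\mathcal{P}^\star)\widetilde{\mathcal{X}}\|_F^2$, which is $L_{\mathrm{off}}$. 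This proves part 2) directly, and it holds for any $\mathcal{X}^\star$. Part 1) is the special case $\mathcal{R}^\star=0$: when $\mathcal{X}^\star\in\mathcal{S}^\star$, the off-subspace term reduces to $\|(I-\mathcal{P}^\star)\widetilde{\mathcal{X}}\|_F^2$, giving $L_{\mathrm{out}}$. I would note that $\mathcal{X}^\star\in\mathcal{S}^\star$ under \eqref{eq:tucker}, since $P_j^\star U_j^\star=U_j^\star$.

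For the core representation in 1), write $\mathcal{P}^\star(\mathcal{Y})=(\mathcal{Y}\times_1U_1^{\star\top}\cdots\times_KU_K^{\star\top})\times_1U_1^\star\cdots\times_KU_K^\star$, using $(A\times_j B)\times_j C=A\times_j(CB)$. Two facts then do the work. First, the map $\mathcal{C}\mapsto\mathcal{C}\times_1U_1^\star\cdots\times_KU_K^\star$ is a Frobenius isometry from $\mathbb{R}^{r_1\times\cdots\times r_K}$, because each $U_j^\star$ has orthonormal columns; equivalently, the Kronecker product of the $U_j^\star$ has orthonormal columns. Second, $\mathcal{X}^\star\times_1U_1^{\star\top}\cdots\times_KU_K^{\star\top}=\mathcal{G}^\star$, because $U_j^{\star\top}U_j^\star=I$. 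Together these give $\|\mathcal{P}^\star(\mathcal{X}^\star-\widetilde{\mathcal{X}})\|_F=\|\mathcal{G}^\star-\widetilde{\mathcal{G}}_\star\|_F$ and $\|\mathcal{X}^\star\|_F=\|\mathcal{G}^\star\|_F$, which yields the core formula.

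The remark in 2) follows from the same computation. In that case $\|\mathcal{X}^\star\|_F^2=\|\mathcal{P}^\star\mathcal{X}^\star\|_F^2+\|\mathcal{R}^\star\|_F^2>\|\mathcal{G}^\star\|_F^2$ whenever $\mathcal{R}^\star\neq0$, where $\mathcal{G}^\star$ now denotes the projected core. So the normalization no longer matches.

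The only step needing care is the self-adjointness and idempotence of $\mathcal{P}^\star$ together with the mode-product composition rules. Everything else is bookkeeping, and I expect no genuine obstacle.
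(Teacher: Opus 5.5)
Your proposal is correct and takes essentially the same route as the paper. You vectorize $\mathcal{P}^\star$ as the symmetric idempotent Kronecker product $P_K^\star\otimes\cdots\otimes P_1^\star$ and apply the Pythagorean identity to $\mathcal{X}^\star-\widetilde{\mathcal{X}}$. You then use the Frobenius isometry of $\mathcal{C}\mapsto\mathcal{C}\times_1U_1^\star\cdots\times_KU_K^\star$ for the core form, and $\|\mathcal{X}^\star\|_F^2=\|\mathcal{P}^\star\mathcal{X}^\star\|_F^2+\|\mathcal{R}^\star\|_F^2$ for the remark in 2).

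The only difference is cosmetic. You obtain 1) as the special case $\mathcal{R}^\star=0$ of 2), whereas the paper substitutes into the common identity separately for each part.
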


Proposition~\ref{prop:decomposition} shows that Interaction Loss $L_{\mathrm{int}}(\widetilde{\mathcal{X}})$ isolates multilinear mode-coupling distortions within $\mathcal{S}^\star$ without double-counting off-subspace or modeling errors. Direction Loss $L_{\mathrm{dir}}^{(k,s)}$ complements it by capturing geometric subspace deviations (rank mismatch and rotation) that remain invisible to entry-wise metrics.

\section{Stability Under Noisy Observations}

Under noisy observations $\widehat{\mathcal X} = \mathcal{X}^\star + \mathcal{E}$, observational noise breaks the subspace inclusion of Proposition~\ref{prop:subspace_inclusion}: the mode-$j$ subspace of $\mathcal F_{k,s}(\widehat{\mathcal X})$ is in general no longer contained in that of $\widehat{\mathcal X}$. The rotation term $d_{\mathrm{rot},j}^{(k,s)}$ no longer vanishes, creating subspace orientation perturbations alongside rank mismatch. Building on Wedin's perturbation theory of singular subspaces \cite{Wedin1972Perturbation}, we establish the perturbation stability of the plug-in estimator $\widehat D_{j\mid k,s} = \|P_j^{(r_j)}(\widehat{\mathcal X}) - P_j^{(\widetilde r_{j\mid k,s})}(\mathcal F_{k,s}(\widehat{\mathcal X}))\|_F^2$ relative to the clean target $D_{j\mid k,s}^\star = \|P_j^\star - \widetilde{P}_j^{(k,s)}\|_F^2$.

\begin{theorem}[Wedin-Type Stability Bound]
\label{thm:stability}
Let $P_j^{(r)}(\mathcal X)$ denote the orthogonal projector onto the leading $r$ left singular subspace of the mode-$j$ unfolding $\mathcal X_{(j)}$. Define $D_{j\mid k,s}^\star = \| P_j^{(r_j)}(\mathcal X^\star) - P_j^{(\widetilde r_{j\mid k,s})}(\mathcal F_{k,s}(\mathcal X^\star)) \|_F^2$ and $\widehat D_{j\mid k,s} = \| P_j^{(r_j)}(\widehat{\mathcal X}) - P_j^{(\widetilde r_{j\mid k,s})}(\mathcal F_{k,s}(\widehat{\mathcal X})) \|_F^2$. Let $\delta_j = \sigma_{r_j}(\mathcal X^\star_{(j)}) - \sigma_{r_j+1}(\mathcal X^\star_{(j)}) > 0$, $\widetilde\delta_{j\mid k,s} = \sigma_{\widetilde r_{j\mid k,s}}(\mathcal F_{k,s}(\mathcal X^\star)_{(j)}) - \sigma_{\widetilde r_{j\mid k,s}+1}(\mathcal F_{k,s}(\mathcal X^\star)_{(j)}) > 0$, and $\gamma_{k,s} = \sigma_s(\mathcal X^\star_{(k)}) - \sigma_{s+1}(\mathcal X^\star_{(k)}) > 0$, and set $\mathcal{E}_j = \|(\widehat{\mathcal X}-\mathcal X^\star)_{(j)}\|_2$ and $\mathcal{E}_k = \|(\widehat{\mathcal X}-\mathcal X^\star)_{(k)}\|_2$. Assume the small-noise conditions $\mathcal{E}_j \le \delta_j / 4$, $\mathcal{E}_k \le \gamma_{k,s} / 4$, and $\sqrt{s}\, \bigl( 1 + C_0 \|\mathcal X^\star_{(k)}\|_2 / \gamma_{k,s} \bigr)\, \mathcal{E}_k \le \widetilde\delta_{j\mid k,s} / 4$ hold. Then:
\begin{equation}
\label{eq:stability_bound}
\begin{aligned}
\left|
\widehat D_{j\mid k,s}
-
D_{j\mid k,s}^\star
\right|
\le{}&
C\sqrt{r_j}
\Biggl[
\frac{
\sqrt{r_j}
\mathcal{E}_j
}{
\delta_j
}
\\
&+
\frac{
\sqrt{\widetilde r_{j\mid k,s}\, s}
\left(
1+
C_0
\frac{\|\mathcal X^\star_{(k)}\|_2}{\gamma_{k,s}}
\right)
\mathcal{E}_k
}{
\widetilde\delta_{j\mid k,s}
}
\Biggr],
\end{aligned}
\end{equation}
with $C = 4\sqrt{2}$ and $C_0 = 2\sqrt{2}$.
\end{theorem}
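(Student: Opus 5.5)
The proof compares the two squared projection distances through their difference. Write $A^\star = P_j^{(r_j)}(\mathcal X^\star)$, $B^\star = P_j^{(\widetilde r_{j\mid k,s})}(\mathcal F_{k,s}(\mathcal X^\star))$, and let $\widehat A, \widehat B$ be their noisy counterparts. Since $\bigl|\|x\|^2-\|y\|^2\bigr| = \bigl|\|x\|-\|y\|\bigr|\,(\|x\|+\|y\|)$, the reverse triangle inequality gives
\begin{equation*}
|\widehat D_{j\mid k,s}-D^\star_{j\mid k,s}| \le \bigl(\|\widehat A-A^\star\|_F+\|\widehat B-B^\star\|_F\bigr)\,\bigl(\|\widehat A-\widehat B\|_F+\|A^\star-B^\star\|_F\bigr).
\end{equation*}
The second factor is crude but dimension-controlled. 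For two projectors, $\|P-Q\|_F^2 = \tr P+\tr Q-2\tr(PQ) \le \rank P + \rank Q$. Since $\widetilde r_{j\mid k,s}\le r_j$ (Proposition~\ref{prop:subspace_inclusion}), each term is at most $\sqrt{2r_j}$, so the factor is at most $2\sqrt{2}\sqrt{r_j}$. This produces the leading $\sqrt{r_j}$ and part of the constant $C=4\sqrt2$. It remains to bound the two projector perturbations by $\sqrt2\cdot$(the bracketed terms), roughly.

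For $\|\widehat A-A^\star\|_F$, I apply the Wedin $\sin\Theta$ theorem to $X^\star_{(j)}$ and $\widehat X_{(j)}$ with gap $\delta_j$. The condition $\mathcal E_j\le\delta_j/4$ keeps the perturbed gap at least $\delta_j/2$. Each of the $r_j$ principal angles then satisfies $\sin\theta\le 2\mathcal E_j/\delta_j$. Using $\|\widehat A-A^\star\|_F=\sqrt2\,\|\sin\Theta\|_F\le \sqrt{2r_j}\,\|\sin\Theta\|_2$ gives a bound of the form $c\sqrt{r_j}\,\mathcal E_j/\delta_j$, which is the first term.

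The second term needs two Wedin steps chained together, and this is where I expect the main obstacle: controlling how far the truncated tensor moves, $\|(\mathcal F_{k,s}(\widehat{\mathcal X})-\mathcal F_{k,s}(\mathcal X^\star))_{(j)}\|_2$. Because the mode-$j$ unfolding is a permutation of the mode-$k$ entries, the only available route is the Frobenius norm: $\|\cdot\|_2\le\|\Pi_s(\widehat X_{(k)})-\Pi_s(X^\star_{(k)})\|_F$. To bound this, I write
\begin{equation*}
\Pi_s(\widehat X_{(k)})-\Pi_s(X^\star_{(k)}) = \widehat Q\,(\widehat X_{(k)} - X^\star_{(k)}) + (\widehat Q-Q^\star)\,X^\star_{(k)},
\end{equation*}
with $Q$ the left rank-$s$ projectors. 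The first piece has Frobenius norm at most $\sqrt{s}\,\mathcal E_k$, since $\widehat Q$ has rank $s$. For the second piece, Wedin with gap $\gamma_{k,s}$ applies, and $\mathcal E_k\le\gamma_{k,s}/4$ makes the perturbed gap at least $\gamma_{k,s}/2$. This gives $\|\widehat Q-Q^\star\|_F\le \sqrt{2s}\cdot 2\mathcal E_k/\gamma_{k,s}$, and multiplying by $\|X^\star_{(k)}\|_2$ produces the factor $C_0=2\sqrt2$. I may have to handle the right-singular-subspace part symmetrically or argue that the one-sided projection suffices; I would try the one-sided form first. Together these give $\eta := \sqrt s\,(1+C_0\|X^\star_{(k)}\|_2/\gamma_{k,s})\,\mathcal E_k$.

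Finally, I treat $\eta$ as the perturbation of $\mathcal F_{k,s}(\mathcal X^\star)_{(j)}$ and apply Wedin once more, with gap $\widetilde\delta_{j\mid k,s}$. The third small-noise hypothesis $\eta\le\widetilde\delta_{j\mid k,s}/4$ is exactly what keeps the perturbed gap at least half the original. This gives $\|\widehat B-B^\star\|_F\le \sqrt{2\widetilde r_{j\mid k,s}}\cdot 2\eta/\widetilde\delta_{j\mid k,s}$. Substituting both projector bounds into the first display and collecting constants should give \eqref{eq:stability_bound}. If the constants come out mismatched, the likely fix is to track the factor of $2$ from the halved gap.
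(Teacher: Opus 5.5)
Your route is valid. It departs from the paper only in how the two projector perturbations are combined. The Wedin steps, the splitting $\Pi_s(\widehat X_{(k)})-\Pi_s(X^\star_{(k)})=\widehat Q(\widehat X_{(k)}-X^\star_{(k)})+(\widehat Q-Q^\star)X^\star_{(k)}$, and the Frobenius-norm passage to the mode-$j$ unfolding are exactly what the paper does. The one-sided form suffices, since $\Pi_s(A)=P_sA$.

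The paper never takes a difference of squared norms. Because all four projectors keep their ranks, $\|P-Q\|_F^2=r_j+\widetilde r_{j\mid k,s}-2\tr(PQ)$ holds for both the clean and the noisy pair. Hence $|\widehat D_{j\mid k,s}-D^\star_{j\mid k,s}|=2|\tr(P(Q-\widehat Q))+\tr((P-\widehat P)\widehat Q)|\le 2\sqrt{r_j}\,(\|P-\widehat P\|_F+\|Q-\widehat Q\|_F)$, by Cauchy--Schwarz with $\|P\|_F=\sqrt{r_j}$ and $\|\widehat Q\|_F=\sqrt{\widetilde r_{j\mid k,s}}\le\sqrt{r_j}$. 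This bound is linear in the perturbations and needs only rank preservation. Your reverse-triangle bound is more generic, but its crude second factor $2\sqrt2\sqrt{r_j}$ is a factor $\sqrt2$ worse.

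That $\sqrt2$ matters for the stated constant. Your projector constant is $2\sqrt2$, coming from the per-angle bound $2\mathcal{E}/\delta$ with a gap of $\delta/2$. Multiplying, you get $C=2\sqrt2\cdot 2\sqrt2=8>4\sqrt2$, so as written the argument does not deliver $C=4\sqrt2$. Either of two one-line repairs closes it.

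\emph{(i) Sharpen the second factor.} The noise-free identity $\|A^\star-B^\star\|_F^2=r_j-\widetilde r_{j\mid k,s}$ (Proposition~III.3) gives $\|\widehat A-\widehat B\|_F+\|A^\star-B^\star\|_F\le\sqrt{r_j+\widetilde r_{j\mid k,s}}+\sqrt{r_j-\widetilde r_{j\mid k,s}}\le 2\sqrt{r_j}$. This recovers the paper's prefactor and gives $C=4\sqrt2$ exactly.

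\emph{(ii) Track the gap, as you suspected.} The quantity in Wedin's theorem is the mixed separation $\widehat\sigma_{r}-\sigma_{r+1}\ge\delta-\mathcal{E}\ge 3\delta/4$, not $\delta/2$. So $\sin\theta_i\le 4\mathcal{E}/(3\delta)$, the projector constant drops to $4\sqrt2/3$, and $C=16/3<4\sqrt2$. If you also tighten the mode-$k$ step, the intermediate $C_0$ only decreases. Your perturbation estimate is then dominated by the one with $C_0=2\sqrt2$, so the stated hypothesis and conclusion still follow.
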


Theorem~\ref{thm:stability} shows that estimation stability is fundamentally controlled by the noise magnitude and spectral gaps ($\delta_j$, $\widetilde{\delta}_{j\mid k,s}$, $\gamma_{k,s}$). Averaging across $j \neq k$ yields global perturbation stability for Direction Loss via $\left| \widehat{L}_{\mathrm{dir}}^{(k,s)} - L_{\mathrm{dir}}^{(k,s)} \right| \le \frac{1}{K-1} \sum_{j \neq k} \frac{B_{j\mid k,s}}{2r_j}$, where $B_{j\mid k,s}$ represents the bound in \eqref{eq:stability_bound}.

\section{Experimental Evaluation}

We evaluated synthetic tensors of size $10 \times 10 \times 10$ with Tucker rank $(4,4,4)$. Core entries $\mathcal{G}^\star$ and factor bases $U_j^\star$ were drawn from independent standard normal distributions, and bases orthonormalized. Noisy observations $\mathcal{Y} = \mathcal{X}^\star + \mathcal{E}$ were generated for $\text{SNR} \in \{0, 2.5, \dots, 30\}$~dB. Mode-$k$ rank-$s$ approximations $\widetilde{\mathcal{X}} = \mathcal{F}_{k,s}(\mathcal{Y})$ were computed for all $k, s \in \{1,2,3,4\}$ across 100 Monte Carlo trials. Relative reconstruction error is $\text{RE} = \|\mathcal{X}^\star - \widetilde{\mathcal{X}}\|_F / \|\mathcal{X}^\star\|_F$.

\begin{figure}[!h]
\centering
\includegraphics[width=0.7\linewidth]{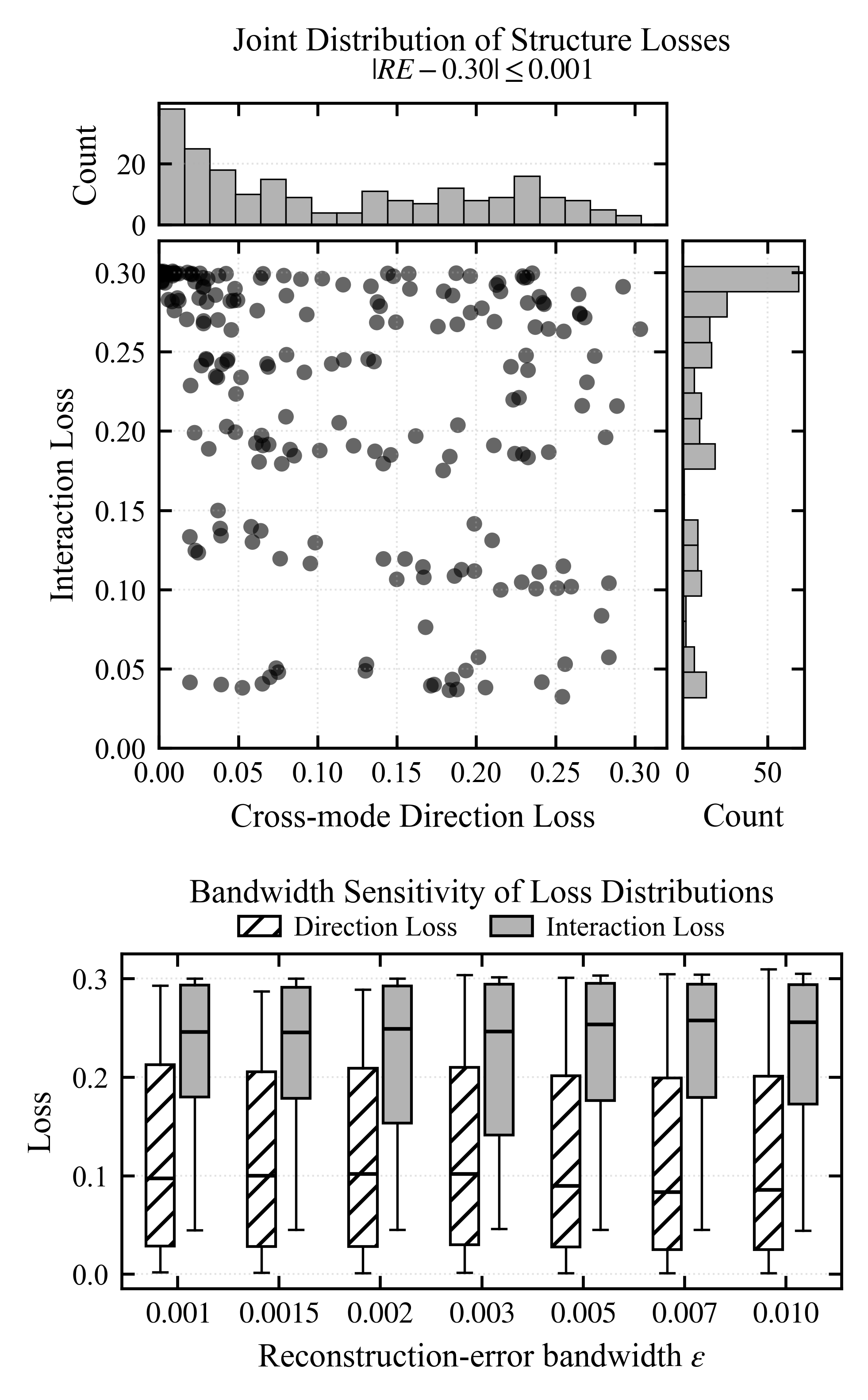}
\caption{Joint distribution of Direction Loss ($\text{DL}$) and Interaction Loss ($\text{IL}$) under near-identical reconstruction error ($|\text{RE}-0.30| \le 0.001$, upper) and boxplots showing loss distribution sensitivity across bandwidths $\epsilon$ (lower).}
\label{figure2_joint_distribution_and_bandwidth_boxplots}
\end{figure}

\begin{figure}[!h]
\centering
\includegraphics[width=0.7\linewidth]{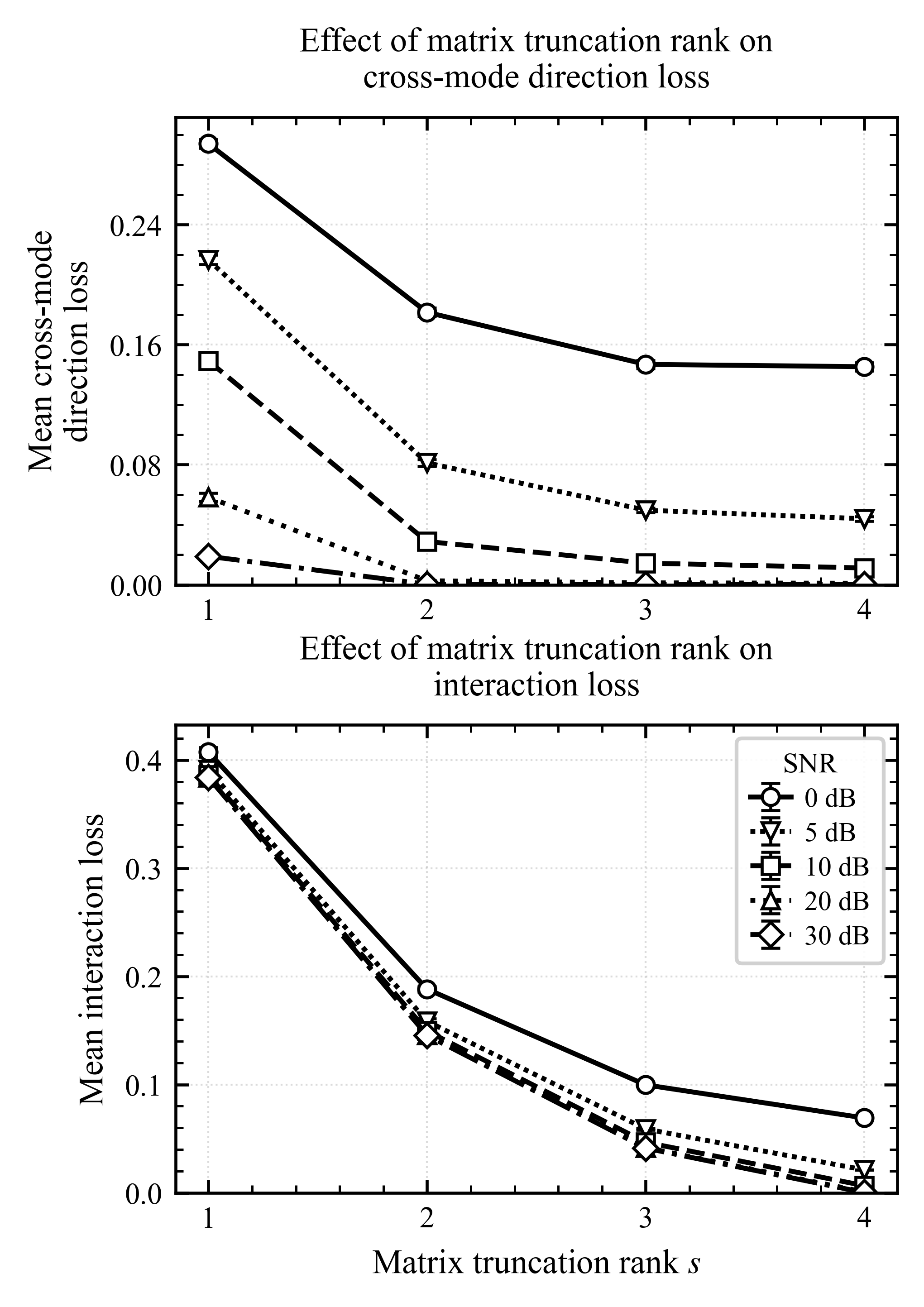}
\caption{Trajectories of mean Direction Loss (upper) and mean Interaction Loss (lower) as functions of the matrix truncation rank $s$ under varying SNRs.}
\label{figure3_loss_vs_rank_vertical}
\end{figure}

Fig.~\ref{figure2_joint_distribution_and_bandwidth_boxplots} shows the joint distribution of structural metrics for approximations satisfying $| \text{RE} - 0.30 | \le 0.001$. Within this band, Direction Loss ranges from $0$ to $0.45$, and Interaction Loss ranges from $0.03$ to $0.30$. The Pearson correlation between Direction Loss and Interaction Loss is $0.453$ (Spearman $\rho = 0.481$). Across bandwidths $\epsilon$ from $0.001$ to $0.010$, the median and interquartile ranges of both metrics remain constant.

Average trajectories of structural losses across truncation rank $s$ are presented in Fig.~\ref{figure3_loss_vs_rank_vertical} (standard error $< 0.003$). At 30~dB SNR and $s=2$, Direction Loss is $0.0002$ and Interaction Loss is $0.148$. At $s=1$, Interaction Loss is approximately $0.4$ across all SNRs. At 0~dB SNR, Direction Loss is $0.1435$ at $s=3$ and $0.1453$ at $s=4$, while $\text{RE}$ reaches a minimum of $0.283 \pm 0.002$ at $s=2$ ($0.354 \pm 0.002$ at $s=1$ and $0.291 \pm 0.003$ at $s=4$).

For real-world validation, we extracted 30 non-overlapping patches ($48 \times 48 \times 103$) from the Pavia University HSI dataset \cite{PaviaU}. Setting $\mathcal{X}^\star = \mathcal{Y}$, reference bases $U_j^\star$ were extracted using a 0.995 energy threshold, capping ranks at $(20, 20, 15)$ to ensure negligible modeling error $\delta_{\mathrm{model}} \le 0.005$. Mode-$k$ approximations $\widetilde{\mathcal{X}} = \mathcal{F}_{k,s}(\mathcal{X}^\star)$ were computed for $k \in \{1,2,3\}$ with $s = \max\left(1,\text{round}(\alpha \cdot r_k)\right)$ for rank fractions $\alpha \in \{0.20, 0.35, 0.50, 0.70, 1.00\}$.

\begin{figure}[h]
\centering
\includegraphics[width=0.75\columnwidth]{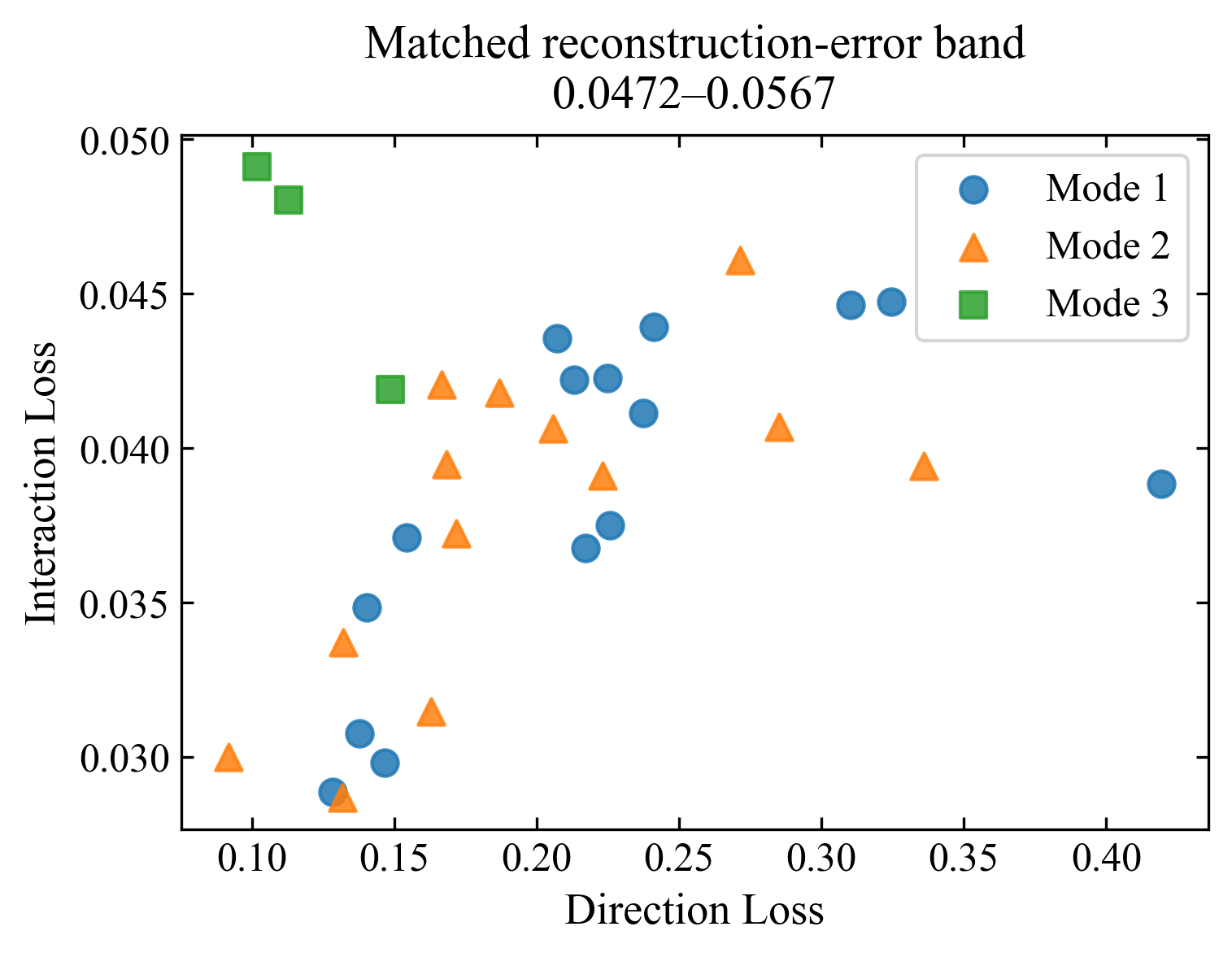}
\caption{Direction Loss versus Interaction Loss for Pavia University HSI patches under near-identical reconstruction error ($0.0472 \le \text{RE} \le 0.0567$), categorized by truncated mode (Mode 1, Mode 2, and Mode 3).}
\label{Figure_A_matched_error_band}
\end{figure}

\begin{figure}[h]
\centering
\includegraphics[width=0.75\columnwidth]{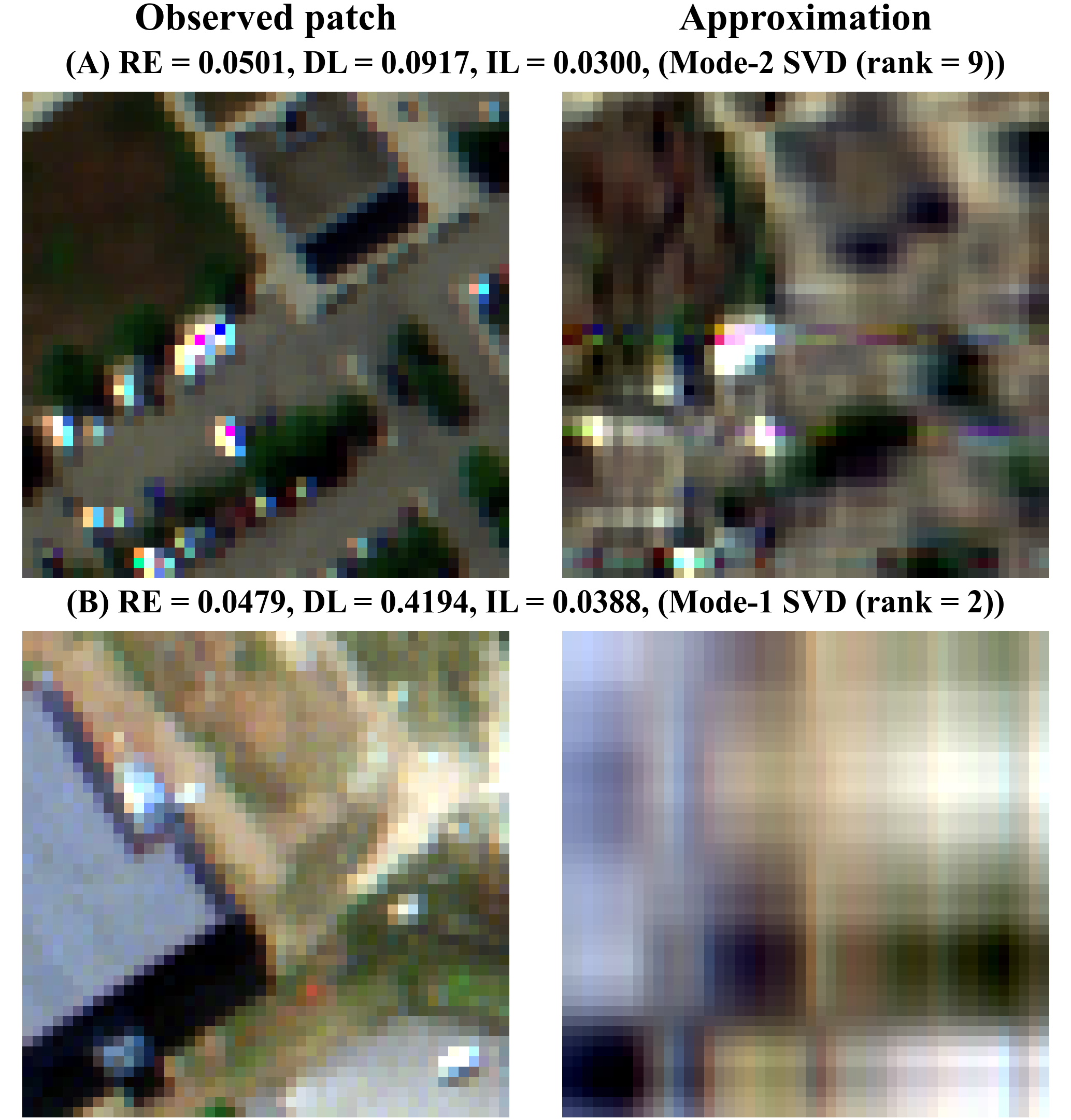}
\caption{Visual comparison of reconstructed Pavia University patches under near-identical reconstruction error ($\text{RE}$). (A) Patch A (Mode-2 SVD, rank 9): $\text{RE} = 0.0501$, Direction Loss ($\text{DL}$) $= 0.0917$, Interaction Loss ($\text{IL}$) $= 0.0300$. (B) Patch B (Mode-1 SVD, rank 2): $\text{RE} = 0.0479$, $\text{DL} = 0.4194$, $\text{IL} = 0.0388$.}
\label{Figure_B_matched_error_representative_pair}
\end{figure}

Fig.~\ref{Figure_A_matched_error_band} displays structural metrics for HSI approximations satisfying $0.0472 \le \text{RE} \le 0.0567$. Direction Loss ranges from $0.09$ to $0.42$, while Interaction Loss ranges from $0.029$ to $0.049$. Truncation along Mode 3 yields Direction Loss values between $0.09$ and $0.15$, whereas Mode 1 truncation yields Direction Loss values up to $0.42$.

Fig.~\ref{Figure_B_matched_error_representative_pair} displays reconstructed patches A and B under near-identical global errors. Patch A operates under Mode-2 SVD (rank 9) with $\text{RE} = 0.0501$, $\text{DL} = 0.0917$, and $\text{IL} = 0.0300$. Patch B operates under Mode-1 SVD (rank 2) with $\text{RE} = 0.0479$, $\text{DL} = 0.4194$, and $\text{IL} = 0.0388$. In Patch A, spatial object outlines remain defined, whereas in Patch B, spatial transitions appear smoothed.

\section{Discussion and Conclusion}

This study addressed the limitation of reconstruction error for evaluating multiway structural preservation in matricized low-rank tensor approximation. We introduced two complementary metrics, Direction Loss and Interaction Loss, to quantify structural degradation induced by matricization. We further established an exact orthogonal error decomposition and a Wedin-type perturbation bound for estimation stability. Experiments on synthetic and hyperspectral datasets showed that nearly identical reconstruction errors can correspond to substantially different structural degradation, demonstrating that the proposed metrics capture structural information beyond conventional criteria. These results establish a structural diagnostic framework for evaluating tensor preservation.

Crucially, the proposed metrics serve as diagnostic tools rather than optimization objectives, complementing reconstruction error. Whereas existing diagnostics evaluate overall decomposition fit or individual subspace similarity, the proposed framework characterizes cross-mode subspace alterations and core interaction distortions. Experiments on hyperspectral imagery show that Direction Loss reflects structural preservation, such as edge sharpness and object boundaries, even when reconstruction errors are nearly identical. These findings suggest that the proposed metrics provide a criterion for rank selection and algorithm comparison beyond reconstruction error.

Certain limitations of this study warrant consideration. The theoretical bounds established herein rely explicitly on orthogonal Tucker reference models and orthogonal projection operators, which precludes direct application to non-orthogonal tensor decompositions. Furthermore, while this work focused on Tucker-type multilinear rank structures, alternative tensor network architectures—such as CANDECOMP/PARAFAC\cite{carroll1970analysis}\cite{harshman1970foundations} and Tensor Train\cite{oseledets2011tensor} decompositions—remain unaddressed.

Future work will focus on extending these structural diagnostics to non-orthogonal representations and broader tensor network models, alongside developing theoretical guidelines for leveraging Direction Loss and Interaction Loss in automated rank and model selection. In addition, extensive validation across diverse real-world domains—including image processing, array signal processing, and machine learning—will be pursued to establish the general utility of the proposed diagnostic framework.

\section*{Acknowledgment}
This work was supported by Japan Science and Technology Agency, Support for Pioneering Research Initiated by the Next Generation (no. JPMJSP2124).

\section*{Conflict of Interest}
The author declares no conflicts of interest.

\clearpage

This supplementary document provides detailed mathematical proofs for all propositions and theorems presented in the main manuscript (Sections II through V). Throughout this document, singular values beyond $\min(m,n)$ are defined as zero (e.g., $\sigma_{r+1} = 0$ if $r = \min(m,n)$). We use the notation and definitions established in the main text. Equation numbers in this supplementary document are independent of those in the main manuscript.

\section{Proof of Proposition II.1 (Core Truncation Representation)}
\label{sec:proof_prop_II_1}

\begin{proposition}[Core Truncation Representation]
Let $\Pi_s(A)$ denote a rank-$s$ truncated SVD of a matrix $A$ for $1 \le s \le r_k$. Let $\mathcal{X}^\star \in \mathbb{R}^{I_1 \times \cdots \times I_K}$ have the exact orthogonal Tucker representation $\mathcal{X}^\star = \mathcal{G}^\star \times_1 U_1^\star \times_2 \cdots \times_K U_K^\star$. Assume that $\sigma_s(X^\star_{(k)}) > \sigma_{s+1}(X^\star_{(k)})$, so that the rank-$s$ truncated SVD is uniquely defined. Then the mode-$k$ truncated tensor $\mathcal{F}_{k,s}(\mathcal{X}^\star) := \fold_k \{ \Pi_s(X^\star_{(k)}) \}$ is represented as:
\begin{equation}
\mathcal{F}_{k,s}(\mathcal{X}^\star) = \widetilde{\mathcal{G}}^{(k,s)} \times_1 U_1^\star \times_2 \cdots \times_K U_K^\star,
\end{equation}
where $\widetilde{\mathcal{G}}^{(k,s)} = \fold_k^{(r_1,\ldots,r_K)} \{ \Pi_s(G^\star_{(k)}) \} \in \mathbb{R}^{r_1 \times \cdots \times r_K}$.
\end{proposition}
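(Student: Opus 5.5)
The plan is to write the mode-$k$ unfolding of the Tucker model as a matrix product and show that truncating it is the same as truncating the core unfolding. Using the standard identity for matricized Tucker products, we have
\begin{equation}
X^\star_{(k)} = U_k^\star\, G^\star_{(k)}\, W_k^\top, \qquad W_k := U_K^\star \otimes \cdots \otimes U_{k+1}^\star \otimes U_{k-1}^\star \otimes \cdots \otimes U_1^\star,
\end{equation}
with the Kronecker ordering fixed by the column convention of $X_{(k)}$. A Kronecker product of matrices with orthonormal columns again has orthonormal columns, so $W_k^\top W_k = I$ and $U_k^{\star\top}U_k^\star = I$.

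Next, take a thin SVD $G^\star_{(k)} = A\Sigma B^\top$. Then
\begin{equation}
X^\star_{(k)} = (U_k^\star A)\,\Sigma\,(W_k B)^\top,
\end{equation}
and the outer factors have orthonormal columns. This is therefore a valid SVD of $X^\star_{(k)}$, and the two matrices have identical nonzero singular values. The gap hypothesis $\sigma_s(X^\star_{(k)})>\sigma_{s+1}(X^\star_{(k)})$ then transfers to $G^\star_{(k)}$. Because of the gap, the rank-$s$ truncation is uniquely defined: it is $P_L A P_R$, where $P_L$ and $P_R$ are the projectors onto the leading $s$-dimensional left and right singular subspaces, and these are independent of the choice of singular vectors. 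It follows that
\begin{equation}
\Pi_s(X^\star_{(k)}) = U_k^\star A_s\Sigma_s B_s^\top W_k^\top = U_k^\star\,\Pi_s(G^\star_{(k)})\,W_k^\top.
\end{equation}
The main, though mild, obstacle is exactly this well-definedness point: we must make sure the truncation of $X^\star_{(k)}$ computed from the lifted SVD coincides with whatever truncated SVD is meant. The gap assumption resolves this cleanly through the projector characterization. Some care is also needed with the padding convention when $s = r_k$, where $\sigma_{s+1}=0$ and the truncation is the identity.

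Finally, set $\widetilde{\mathcal G}^{(k,s)} := \fold_k^{(r_1,\ldots,r_K)}\{\Pi_s(G^\star_{(k)})\}$. Applying the same unfolding identity in reverse, the mode-$k$ unfolding of $\widetilde{\mathcal G}^{(k,s)}\times_1U_1^\star\cdots\times_KU_K^\star$ equals $U_k^\star\,\Pi_s(G^\star_{(k)})\,W_k^\top$. This matches the expression above, and folding both sides with $\fold_k$ gives the claimed representation. Only bookkeeping of index orderings remains, and this is consistent because the same $W_k$ is used in both directions.
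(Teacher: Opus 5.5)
Your proposal is correct and follows essentially the same route as the paper: write $X^\star_{(k)} = U_k^\star G^\star_{(k)} U_{-k}^{\star\top}$ with a column-orthonormal Kronecker factor, lift a compact SVD of $G^\star_{(k)}$ to a compact SVD of $X^\star_{(k)}$, keep the leading $s$ components (unique by the gap), and refold. Your projector-based justification of uniqueness is a slightly more careful version of the paper's one-line appeal to the gap; note only that in writing $P_L A P_R$ you reuse the symbol $A$, which you had already assigned to the left singular vectors of $G^\star_{(k)}$.
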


\begin{proof}
Let the clean reference tensor $\mathcal{X}^\star \in \mathbb{R}^{I_1 \times \cdots \times I_K}$ have the exact orthogonal Tucker representation:
\begin{equation}
\label{eq:proof_prop1_tucker_def}
\mathcal{X}^\star = \mathcal{G}^\star \times_1 U_1^\star \times_2 \cdots \times_K U_K^\star,
\end{equation}
where $\mathcal{G}^\star \in \mathbb{R}^{r_1 \times \cdots \times r_K}$ is the core tensor, and each $U_l^\star \in \mathbb{R}^{I_l \times r_l}$ is a column-orthonormal factor matrix satisfying $U_l^{\star\top} U_l^\star = I_{r_l}$ for $l=1, \ldots, K$.

Let $q_k := \rank(G^\star_{(k)}) = \rank(X^\star_{(k)})$, where the second equality holds because the factor matrices have full column rank. Since the main text defines $r_k = \rank(X^\star_{(k)})$, we have $q_k = r_k$; in particular, $G^\star_{(k)} \in \mathbb{R}^{r_k \times r_{-k}}$ has full row rank, which implies $r_k \le r_{-k}$ with $r_{-k} = \prod_{l \neq k} r_l$. (We retain the symbol $q_k$ below so that the argument remains valid verbatim for any Tucker representation with $q_k \le r_k$.) Because $\sigma_s(X^\star_{(k)}) > \sigma_{s+1}(X^\star_{(k)})$, necessarily $s \le q_k$.

Define the mode-$k$ right Kronecker product factor matrix:
\begin{equation}
U_{-k}^\star = U_K^\star \otimes \cdots \otimes U_{k+1}^\star \otimes U_{k-1}^\star \otimes \cdots \otimes U_1^\star \in \mathbb{R}^{I_{-k} \times r_{-k}},
\end{equation}
where $I_{-k} = \prod_{l \neq k} I_l$. Using the mixed-product property of Kronecker products:
\begin{align}
&U_{-k}^{\star\top} U_{-k}^\star \nonumber \\
&= \left( \bigotimes_{l \neq k} U_l^\star \right)^\top \left( \bigotimes_{l \neq k} U_l^\star \right) \nonumber \\
&= \bigotimes_{l \neq k} \left( U_l^{\star\top} U_l^\star \right) = \bigotimes_{l \neq k} I_{r_l} = I_{r_{-k}}.
\end{align}
Thus, $U_{-k}^\star$ is column-orthonormal. The mode-$k$ unfolding $X^\star_{(k)} \in \mathbb{R}^{I_k \times I_{-k}}$ of $\mathcal{X}^\star$ is expressed algebraically as:
\begin{equation}
\label{eq:proof_prop1_unfolding}
X^\star_{(k)} = U_k^\star G^\star_{(k)} U_{-k}^{\star\top},
\end{equation}
where $G^\star_{(k)} \in \mathbb{R}^{r_k \times r_{-k}}$ is the mode-$k$ unfolding of the core tensor $\mathcal{G}^\star$.

Let $G^\star_{(k)} = \Phi \Sigma \Psi^\top$ be a compact SVD of $G^\star_{(k)}$, where $\Phi \in \mathbb{R}^{r_k \times q_k}$ is column-orthonormal ($\Phi^\top \Phi = I_{q_k}$), $\Psi \in \mathbb{R}^{r_{-k} \times q_k}$ is column-orthonormal ($\Psi^\top \Psi = I_{q_k}$), and $\Sigma = \operatorname{diag}(\sigma_1, \ldots, \sigma_{q_k}) \in \mathbb{R}^{q_k \times q_k}$ contains positive singular values sorted in nonincreasing order ($\sigma_1 \ge \cdots \ge \sigma_{q_k} > 0$). Substituting this into \eqref{eq:proof_prop1_unfolding} yields:
\begin{equation}
\label{eq:proof_prop1_full_svd_expanded}
X^\star_{(k)} = U_k^\star \left( \Phi \Sigma \Psi^\top \right) U_{-k}^{\star\top} = \left( U_k^\star \Phi \right) \Sigma \left( U_{-k}^\star \Psi \right)^\top.
\end{equation}
We verify the column-orthonormality of the factor matrices in \eqref{eq:proof_prop1_full_svd_expanded}:
\begin{equation}
(U_k^\star \Phi)^\top (U_k^\star \Phi) = \Phi^\top (U_k^{\star\top} U_k^\star) \Phi = \Phi^\top I_{r_k} \Phi = I_{q_k},
\end{equation}
and
\begin{equation}
(U_{-k}^\star \Psi)^\top (U_{-k}^\star \Psi) = \Psi^\top (U_{-k}^{\star\top} U_{-k}^\star) \Psi = \Psi^\top I_{r_{-k}} \Psi = I_{q_k}.
\end{equation}
Thus, \eqref{eq:proof_prop1_full_svd_expanded} is a valid compact SVD of $X^\star_{(k)}$.

Under the spectral gap assumption $\sigma_s(X^\star_{(k)}) > \sigma_{s+1}(X^\star_{(k)})$, the rank-$s$ truncated matrix $\Pi_s(X^\star_{(k)})$ is uniquely determined by taking the leading $s$ components:
\begin{align}
&\Pi_s(X^\star_{(k)}) \nonumber \\
&= \left( U_k^\star \Phi \right)_{:, 1:s} \Sigma_{1:s, 1:s} \left( U_{-k}^\star \Psi \right)_{:, 1:s}^\top \nonumber \\
&= U_k^\star \left( \Phi_{:, 1:s} \Sigma_{1:s, 1:s} \Psi_{:, 1:s}^\top \right) U_{-k}^{\star\top} \nonumber \\
&= U_k^\star \Pi_s(G^\star_{(k)}) U_{-k}^{\star\top},
\label{eq:proof_prop1_truncated_matrix}
\end{align}
where $\Pi_s(G^\star_{(k)}) = \Phi_{:, 1:s} \Sigma_{1:s, 1:s} \Psi_{:, 1:s}^\top \in \mathbb{R}^{r_k \times r_{-k}}$ is the rank-$s$ truncated SVD of the core matrix $G^\star_{(k)}$.

Refolding $\Pi_s(X^\star_{(k)})$ via $\mathcal{F}_{k,s}(\mathcal{X}^\star) = \fold_k \{ \Pi_s(X^\star_{(k)}) \}$ yields:
\begin{align}
&\mathcal{F}_{k,s}(\mathcal{X}^\star) \nonumber \\
&= \fold_k \left\{ U_k^\star \Pi_s(G^\star_{(k)}) U_{-k}^{\star\top} \right\} \nonumber \\
&= \left( \fold_k^{(r_1,\ldots,r_K)} \{ \Pi_s(G^\star_{(k)}) \} \right) \times_1 U_1^\star \times_2 \cdots \times_K U_K^\star \nonumber \\
&= \widetilde{\mathcal{G}}^{(k,s)} \times_1 U_1^\star \times_2 \cdots \times_K U_K^\star,
\end{align}
where $\widetilde{\mathcal{G}}^{(k,s)} = \fold_k^{(r_1,\ldots,r_K)} \{ \Pi_s(G^\star_{(k)}) \} \in \mathbb{R}^{r_1 \times \cdots \times r_K}$.

Thus, the mode-$k$ truncated tensor $\mathcal{F}_{k,s}(\mathcal{X}^\star)$ admits a Tucker representation using the same ambient factor matrices $\{U_1^\star, \ldots, U_K^\star\}$. This completes the proof of Proposition II.1.
\end{proof}

\section{Proof of Proposition III.1 (Cross-Mode Subspace Inclusion)}
\label{sec:proof_prop_III_1}

\begin{proposition}[Cross-Mode Subspace Inclusion]
Under Proposition II.1, for any non-truncated mode $j \neq k$, the column space of the mode-$j$ unfolding of $\mathcal F_{k,s}(\mathcal X^\star)$ satisfies
\begin{equation}
\mathcal R\left( \left(\mathcal F_{k,s}(\mathcal X^\star)\right)_{(j)} \right) \subseteq \mathcal R\left( X^\star_{(j)} \right).
\end{equation}
Consequently, the mode-$j$ orthogonal projection matrix $\widetilde{P}_j^{(k,s)}$ of $\mathcal{F}_{k,s}(\mathcal{X}^\star)$ satisfies $\mathcal R(\widetilde{P}_j^{(k,s)}) \subseteq \mathcal R(P_j^\star)$.
\end{proposition}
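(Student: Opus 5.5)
We build on Proposition II.1. It gives the Tucker form $\mathcal F_{k,s}(\mathcal X^\star) = \widetilde{\mathcal G}^{(k,s)} \times_1 U_1^\star \times_2 \cdots \times_K U_K^\star$, and this form uses the same orthonormal factors as $\mathcal X^\star$. We therefore unfold this representation along a non-truncated mode $j \neq k$. By the same Kronecker identity used in the proof of Proposition II.1, now applied to mode $j$,
\begin{equation*}
\bigl(\mathcal F_{k,s}(\mathcal X^\star)\bigr)_{(j)} = U_j^\star\, \widetilde G^{(k,s)}_{(j)}\, U_{-j}^{\star\top},
\end{equation*}
where $\widetilde G^{(k,s)}_{(j)}$ is the mode-$j$ unfolding of the truncated core and $U_{-j}^\star$ is the column-orthonormal Kronecker product of the factors $U_l^\star$ for $l\neq j$. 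Any matrix of the form $U_j^\star M$ has column space contained in $\mathcal R(U_j^\star)$. Hence
\begin{equation*}
\mathcal R\bigl((\mathcal F_{k,s}(\mathcal X^\star))_{(j)}\bigr) \subseteq \mathcal R(U_j^\star).
\end{equation*}

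Next we identify $\mathcal R(U_j^\star)$ with $\mathcal R(X^\star_{(j)})$. Unfolding \eqref{eq:tucker} along mode $j$ gives $X^\star_{(j)} = U_j^\star G^\star_{(j)} U_{-j}^{\star\top}$, so $\mathcal R(X^\star_{(j)}) \subseteq \mathcal R(U_j^\star)$. The standing assumption is $r_j = \rank(X^\star_{(j)})$, and $U_j^\star$ has exactly $r_j$ orthonormal columns, so the two spaces have equal dimension. Equal dimension plus inclusion gives $\mathcal R(X^\star_{(j)}) = \mathcal R(U_j^\star)$. (Equivalently, $G^\star_{(j)}$ has full row rank, as noted for mode $k$ in the proof of Proposition II.1.) Combining this with the previous step yields \eqref{eq:subspace_inclusion}.

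For the projector statement, note that $\widetilde P_j^{(k,s)}$ is by definition the orthogonal projector onto $\mathcal R\bigl((\mathcal F_{k,s}(\mathcal X^\star))_{(j)}\bigr)$. Likewise $P_j^\star = U_j^\star U_j^{\star\top}$ projects onto $\mathcal R(U_j^\star) = \mathcal R(X^\star_{(j)})$. Since the range of an orthogonal projector is exactly the subspace it projects onto, the inclusion $\mathcal R(\widetilde P_j^{(k,s)}) \subseteq \mathcal R(P_j^\star)$ follows immediately. For completeness we will also record the algebraic consequence $P_j^\star \widetilde P_j^{(k,s)} = \widetilde P_j^{(k,s)}$, which is used in Proposition III.3.

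No step is genuinely hard. The only point that needs care is the index bookkeeping in the mode-$j$ unfolding: we must check that $\fold_k^{(r_1,\dots,r_K)}$ followed by mode-$j$ unfolding gives a core whose mode-$j$ multiplication is still by $U_j^\star$. This is a direct consequence of the mode-product definition, since the Tucker form in Proposition II.1 is a full multilinear product that is independent of which mode is unfolded. We also note that the equality $\mathcal R(U_j^\star)=\mathcal R(X^\star_{(j)})$ relies on $r_j$ being the true rank. Without it, the inclusion into $\mathcal R(U_j^\star)$ would still hold.
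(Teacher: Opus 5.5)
Your proof is correct. It differs from the paper's main proof, though it is essentially the alternative the paper mentions in the remark after that proof.

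\textbf{The paper's route.} It writes the truncation as a mode-$k$ projection, $\mathcal{F}_{k,s}(\mathcal{X}^\star) = \mathcal{X}^\star \times_k P_{k,s}$, where $P_{k,s}$ projects onto the leading $s$ left singular vectors of $X^\star_{(k)}$. Unfolding along mode $j$ then gives $X^\star_{(j)} M_{j,k}^\top$ for a Kronecker matrix $M_{j,k}$, and $\mathcal{R}(AB) \subseteq \mathcal{R}(A)$ finishes the inclusion. This argument needs neither the Tucker model, nor Proposition II.1, nor its spectral-gap hypothesis. It shows that mode-$k$ truncation never enlarges another mode's column space, for an arbitrary tensor.

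\textbf{Your route.} You rely on the exact Tucker structure and, through Proposition II.1, on the gap assumption, which the statement does grant. In return you land directly in $\mathcal{R}(U_j^\star)$. Your rank count ($U_j^\star$ has $r_j = \rank(X^\star_{(j)})$ orthonormal columns) makes the identification $\mathcal{R}(U_j^\star) = \mathcal{R}(X^\star_{(j)})$ explicit. The paper uses this identification only implicitly: its proof takes $P_j^\star$ to be the projector onto $\mathcal{R}(X^\star_{(j)})$, while the main text defines $P_j^\star = U_j^\star U_j^{\star\top}$. So your version is the one that actually reconciles the two definitions of $P_j^\star$ used in the projector statement.
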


\begin{proof}
Let $P_{k,s} \in \mathbb{R}^{I_k \times I_k}$ denote the orthogonal projection matrix onto the leading $s$-dimensional left singular subspace of $X^\star_{(k)}$. By definition of mode-$k$ SVD rank-$s$ truncation, $\Pi_s(X^\star_{(k)}) = P_{k,s} X^\star_{(k)}$. Therefore, the mode-$k$ truncated tensor is expressed as:
\begin{equation}
\mathcal{F}_{k,s}(\mathcal{X}^\star) = \fold_k \{ P_{k,s} X^\star_{(k)} \} = \mathcal{X}^\star \times_k P_{k,s}.
\end{equation}

Now fix any non-truncated mode $j \ne k$. Using the standard multilinear product unfolding identity, the mode-$j$ matricization of $\mathcal{F}_{k,s}(\mathcal{X}^\star)$ is given by:
\begin{equation}
\label{eq:proof_prop2_direct_unfold}
\left( \mathcal{F}_{k,s}(\mathcal{X}^\star) \right)_{(j)} = \left( \mathcal{X}^\star \times_k P_{k,s} \right)_{(j)} = X^\star_{(j)} M_{j,k}^\top,
\end{equation}
where $M_{j,k} \in \mathbb{R}^{I_{-j} \times I_{-j}}$ is the Kronecker product matrix defined over all modes $\ell \in \{1, \ldots, K\} \setminus \{j\}$ by:
\begin{equation}
\label{eq:proof_prop2_M_jk_def}
M_{j,k} = A_K \otimes \cdots \otimes A_{j+1} \otimes A_{j-1} \otimes \cdots \otimes A_1 \in \mathbb{R}^{I_{-j} \times I_{-j}},
\end{equation}
with
\begin{equation}
A_\ell = \begin{cases} P_{k,s}, & \ell = k, \\ I_{I_\ell}, & \ell \neq j, k. \end{cases}
\end{equation}

By fundamental linear algebra, for any matrices $A$ and $B$ of compatible dimensions, the column space (range) of their product satisfies $\mathcal{R}(AB) \subseteq \mathcal{R}(A)$. Applying this general property directly to \eqref{eq:proof_prop2_direct_unfold} with $A = X^\star_{(j)}$ and $B = M_{j,k}^\top$:
\begin{equation}
\mathcal R\left( \left(\mathcal F_{k,s}(\mathcal X^\star)\right)_{(j)} \right) = \mathcal R\left( X^\star_{(j)} M_{j,k}^\top \right) \subseteq \mathcal R\left( X^\star_{(j)} \right).
\end{equation}

Let $\widetilde{P}_j^{(k,s)}$ denote the orthogonal projection matrix onto $\mathcal{R}\left( (\mathcal{F}_{k,s}(\mathcal{X}^\star))_{(j)} \right)$, and let $P_j^\star$ denote the orthogonal projection matrix onto $\mathcal{R}(X^\star_{(j)})$. By projection operator theory, if subspace $\mathcal{V} \subseteq \mathcal{W}$, then $\mathcal{R}(P_{\mathcal{V}}) \subseteq \mathcal{R}(P_{\mathcal{W}})$. Thus, the range containment directly implies:
\begin{equation}
\mathcal R(\widetilde{P}_j^{(k,s)}) \subseteq \mathcal R(P_j^\star),
\end{equation}
which completes the proof of Proposition III.1.
\end{proof}

\begin{remark}
Proposition III.1 can alternatively be obtained directly from Proposition II.1: the representation $\mathcal{F}_{k,s}(\mathcal{X}^\star) = \widetilde{\mathcal{G}}^{(k,s)} \times_1 U_1^\star \times_2 \cdots \times_K U_K^\star$ gives the mode-$j$ unfolding $(\mathcal{F}_{k,s}(\mathcal{X}^\star))_{(j)} = U_j^\star \widetilde{G}^{(k,s)}_{(j)} U_{-j}^{\star\top}$, whence $\mathcal{R}\left( (\mathcal{F}_{k,s}(\mathcal{X}^\star))_{(j)} \right) \subseteq \mathcal{R}(U_j^\star) = \mathcal{R}(X^\star_{(j)})$. The proof above is given in a self-contained form that does not rely on the Tucker representation.
\end{remark}

\section{Proof of Proposition III.2 (Rank--Angle Decomposition)}
\label{sec:proof_prop_III_2}

\begin{proposition}[Rank--Angle Decomposition]
For orthogonal projectors $P$ of rank $r \ge 1$ and $\widetilde{P}$ of rank $\widetilde{r} \ge 0$, let $m = \min(r, \widetilde{r})$ and let $\theta_1, \ldots, \theta_m$ be the principal angles between $\mathcal{R}(P)$ and $\mathcal{R}(\widetilde{P})$. The normalized projection distance decomposes as
\begin{equation}
\frac{\| P - \widetilde{P} \|_F^2}{2r} = d_{\mathrm{rank}} + d_{\mathrm{rot}} = \frac{| r - \widetilde{r} |}{2r} + \frac{1}{r} \sum_{i=1}^{m} \sin^2 \theta_i.
\end{equation}
Applying this to $d_j^{(k,s)}$ yields $d_j^{(k,s)} = d_{\mathrm{rank},j}^{(k,s)} + d_{\mathrm{rot},j}^{(k,s)}$, where $d_{\mathrm{rank},j}^{(k,s)} = |r_j - \widetilde{r}_{j\mid k,s}| / (2r_j)$ represents the rank-mismatch term and $d_{\mathrm{rot},j}^{(k,s)}$ is the orientation/rotation term.
\end{proposition}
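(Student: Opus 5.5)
Expanding the squared Frobenius norm with traces reduces everything to one trace term, and the principal angles evaluate that term. Since $P$ and $\widetilde{P}$ are symmetric idempotents, $\tr(P^2)=\tr(P)=r$ and $\tr(\widetilde{P}^2)=\widetilde{r}$. Therefore
\begin{equation*}
\|P-\widetilde{P}\|_F^2=\tr(P)+\tr(\widetilde{P})-2\tr(P\widetilde{P})=r+\widetilde{r}-2\tr(P\widetilde{P}).
\end{equation*}

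Next we compute $\tr(P\widetilde{P})$ from orthonormal bases. If $\widetilde{r}=0$, then $\widetilde{P}=0$, so $m=0$, the angle sum is empty, and both sides equal $r/(2r)=1/2$. Assume $\widetilde{r}\ge 1$ and take orthonormal bases $Q\in\mathbb{R}^{n\times r}$ and $\widetilde{Q}\in\mathbb{R}^{n\times\widetilde{r}}$, so that $P=QQ^\top$ and $\widetilde{P}=\widetilde{Q}\widetilde{Q}^\top$. By cyclicity,
\begin{equation*}
\tr(P\widetilde{P})=\|Q^\top\widetilde{Q}\|_F^2.
\end{equation*}
By the standard definition of principal angles, the singular values of $Q^\top\widetilde{Q}$ are $\cos\theta_1,\ldots,\cos\theta_m$ with $m=\min(r,\widetilde{r})$. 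Hence $\tr(P\widetilde{P})=\sum_{i=1}^m\cos^2\theta_i=m-\sum_{i=1}^m\sin^2\theta_i$.

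Substituting gives
\begin{equation*}
\|P-\widetilde{P}\|_F^2=r+\widetilde{r}-2m+2\sum_{i=1}^m\sin^2\theta_i.
\end{equation*}
The key identity is $r+\widetilde{r}-2\min(r,\widetilde{r})=|r-\widetilde{r}|$. Dividing by $2r$ yields exactly $d_{\mathrm{rank}}+d_{\mathrm{rot}}$ as stated. The specialization to $d_j^{(k,s)}$ then follows by taking $P=P_j^\star$ (rank $r_j\ge1$) and $\widetilde{P}=\widetilde{P}_j^{(k,s)}$ (rank $\widetilde{r}_{j\mid k,s}$), using the definition of $d_j^{(k,s)}$ in \eqref{eq:dir_loss}.

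The only step needing care is identifying the singular values of $Q^\top\widetilde{Q}$ with the cosines of the principal angles when $r\neq\widetilde{r}$. The number of angles is exactly $m$, and the matrix $Q^\top\widetilde{Q}$ is $r\times\widetilde{r}$, so it has exactly $m$ singular values; this is consistent with the principal-angle definition and needs only to be invoked, not proved. Everything else is routine trace algebra.
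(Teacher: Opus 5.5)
Your proposal is correct and follows the paper's proof essentially step for step. Both arguments expand the trace using idempotence, evaluate $\tr(P\widetilde{P})=\|Q^\top\widetilde{Q}\|_F^2=\sum_{i=1}^m\cos^2\theta_i$, and use the identity $r+\widetilde{r}-2\min(r,\widetilde{r})=|r-\widetilde{r}|$; your explicit check of the degenerate case $\widetilde{r}=0$ is a small addition that the paper leaves implicit.
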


\begin{proof}
Let $P, \widetilde{P} \in \mathbb{R}^{n \times n}$ be orthogonal projectors onto $\mathcal{U} = \mathcal{R}(P)$ (rank $r = \tr(P) \ge 1$) and $\widetilde{\mathcal{U}} = \mathcal{R}(\widetilde{P})$ (rank $\widetilde{r} = \tr(\widetilde{P}) \ge 0$). Let $U \in \mathbb{R}^{n \times r}$ and $\widetilde{U} \in \mathbb{R}^{n \times \widetilde{r}}$ be orthonormal basis matrices, so $P = U U^\top$, $\widetilde{P} = \widetilde{U} \widetilde{U}^\top$, $U^\top U = I_r$, and $\widetilde{U}^\top \widetilde{U} = I_{\widetilde{r}}$.

Expanding the squared Frobenius norm using trace properties:
\begin{align}
&\| P - \widetilde{P} \|_F^2 \nonumber \\
&= \tr\left( (P - \widetilde{P})^\top (P - \widetilde{P}) \right) \nonumber \\
&= \tr(P^2) + \tr(\widetilde{P}^2) - 2\tr(P\widetilde{P}).
\label{eq:proof_prop3_trace_split}
\end{align}
Since $P^2 = P$ and $\widetilde{P}^2 = \widetilde{P}$, we have $\tr(P^2) = r$ and $\tr(\widetilde{P}^2) = \widetilde{r}$. Using the cyclic property of trace:
\begin{equation}
\label{eq:proof_prop3_cyclic_trace}
\tr(P\widetilde{P}) = \tr(U U^\top \widetilde{U} \widetilde{U}^\top) = \tr(U^\top \widetilde{U} \widetilde{U}^\top U) = \| U^\top \widetilde{U} \|_F^2.
\end{equation}

Let $m = \min(r, \widetilde{r})$. The singular values of the cross-basis matrix $U^\top \widetilde{U} \in \mathbb{R}^{r \times \widetilde{r}}$ are given by $\gamma_i = \cos\theta_i$ for $i=1, \ldots, m$, where $\theta_i$ are the principal angles between $\mathcal{U}$ and $\widetilde{\mathcal{U}}$. Thus:
\begin{equation}
\label{eq:proof_prop3_norm_cos}
\| U^\top \widetilde{U} \|_F^2 = \sum_{i=1}^{m} \cos^2 \theta_i.
\end{equation}
Substituting \eqref{eq:proof_prop3_cyclic_trace} and \eqref{eq:proof_prop3_norm_cos} into \eqref{eq:proof_prop3_trace_split}:
\begin{align}
&\| P - \widetilde{P} \|_F^2 \nonumber \\
&= r + \widetilde{r} - 2 \sum_{i=1}^{m} \cos^2 \theta_i \nonumber \\
&= r + \widetilde{r} - 2 \sum_{i=1}^{m} (1 - \sin^2 \theta_i) \nonumber \\
&= r + \widetilde{r} - 2m + 2 \sum_{i=1}^{m} \sin^2 \theta_i.
\label{eq:proof_prop3_pythagorean_expansion}
\end{align}

We verify that $r + \widetilde{r} - 2m = | r - \widetilde{r} |$ holds identically:
\begin{itemize}
\item If $r \ge \widetilde{r}$, then $m = \widetilde{r}$, giving $r + \widetilde{r} - 2\widetilde{r} = r - \widetilde{r} = | r - \widetilde{r} |$.
\item If $r < \widetilde{r}$, then $m = r$, giving $r + \widetilde{r} - 2r = \widetilde{r} - r = | r - \widetilde{r} |$.
\end{itemize}
Substituting this identity into \eqref{eq:proof_prop3_pythagorean_expansion} yields:
\begin{equation}
\label{eq:proof_prop3_final_proj_distance}
\| P - \widetilde{P} \|_F^2 = | r - \widetilde{r} | + 2 \sum_{i=1}^{m} \sin^2 \theta_i.
\end{equation}
Dividing both sides of \eqref{eq:proof_prop3_final_proj_distance} by $2r$ ($r \ge 1$) yields the normalized decomposition:
\begin{equation}
\frac{\| P - \widetilde{P} \|_F^2}{2r} = \frac{| r - \widetilde{r} |}{2r} + \frac{1}{r} \sum_{i=1}^{m} \sin^2 \theta_i = d_{\mathrm{rank}} + d_{\mathrm{rot}},
\end{equation}
which completes the proof of Proposition III.2.
\end{proof}

\section{Proof of Proposition III.3 (Noise-Free Specialization)}
\label{sec:proof_prop_III_3}

\begin{proposition}[Noise-Free Specialization]
For a clean reference tensor $\mathcal{X}^\star$ with exact orthogonal Tucker representation, the inclusion $\mathcal{R}(\widetilde{P}_j^{(k,s)}) \subseteq \mathcal{R}(P_j^\star)$ implies $\theta_i = 0$ and $d_{\mathrm{rot},j}^{(k,s)} = 0$. Thus, $d_j^{(k,s)}$ reduces to pure rank contraction:
\begin{equation}
d_j^{(k,s)} = \frac{r_j - \widetilde{r}_{j\mid k,s}}{2r_j}.
\end{equation}
Defining subspace Coverage and Precision as
\begin{equation}
R_{j\mid k,s}^{\mathrm{coverage}} = \frac{\tr(P_j^\star \widetilde{P}_j^{(k,s)})}{r_j}, \quad R_{j\mid k,s}^{\mathrm{precision}} = \frac{\tr(P_j^\star \widetilde{P}_j^{(k,s)})}{\widetilde{r}_{j\mid k,s}},
\end{equation}
we have $R_{j\mid k,s}^{\mathrm{precision}} = 1$ identically (when $\widetilde{r}_{j\mid k,s} > 0$), and $d_j^{(k,s)}$ simplifies to
\begin{equation}
d_j^{(k,s)} = \frac{1}{2} \left( 1 - R_{j\mid k,s}^{\mathrm{coverage}} \right).
\end{equation}
\end{proposition}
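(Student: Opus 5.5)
The plan is to combine the inclusion of Proposition III.1 with the trace identity established in the proof of Proposition III.2. Write $r = r_j$ and $\widetilde r = \widetilde r_{j\mid k,s}$, and let $U = U_j^\star$ and $\widetilde U$ be orthonormal bases of $\mathcal R(P_j^\star)$ and $\mathcal R(\widetilde P_j^{(k,s)})$.

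First, Proposition III.1 gives $\mathcal R(\widetilde P_j^{(k,s)}) \subseteq \mathcal R(P_j^\star)$. Two consequences follow:
\begin{enumerate}[1)]
\item $\widetilde r \le r$, so $m = \widetilde r$.
\item $P_j^\star \widetilde U = \widetilde U$, and hence $P_j^\star \widetilde P_j^{(k,s)} = \widetilde P_j^{(k,s)}$.
\end{enumerate}
From the second fact, $\tr(P_j^\star \widetilde P_j^{(k,s)}) = \tr(\widetilde P_j^{(k,s)}) = \widetilde r$.

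On the other hand, the proof of Proposition III.2 showed that $\tr(P\widetilde P) = \|U^\top \widetilde U\|_F^2 = \sum_{i=1}^m \cos^2\theta_i$. Combining the two expressions gives $\sum_{i=1}^{\widetilde r} \cos^2\theta_i = \widetilde r$. Since each term satisfies $\cos^2\theta_i \le 1$, every term must equal $1$, so $\theta_i = 0$ for all $i$. Therefore $d_{\mathrm{rot},j}^{(k,s)} = 0$. Equivalently, $U^\top\widetilde U$ is an isometry, because $\widetilde U^\top U U^\top \widetilde U = \widetilde U^\top P_j^\star \widetilde U = I_{\widetilde r}$.

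Proposition III.2 then reduces to $d_j^{(k,s)} = |r-\widetilde r|/(2r) = (r-\widetilde r)/(2r)$, using $\widetilde r \le r$. When $\widetilde r > 0$, the Precision is $\widetilde r/\widetilde r = 1$, and the Coverage is $\widetilde r/r$. Substituting gives $d_j^{(k,s)} = \tfrac12(1 - \widetilde r/r) = \tfrac12(1 - R^{\mathrm{coverage}}_{j\mid k,s})$.

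The only delicate point is the degenerate case $\widetilde r = 0$. There $m = 0$, so there are no principal angles and the rotation sum is empty, hence zero. Precision is undefined, which the statement already excludes. Coverage is $0$, and the formula still holds because $d_j^{(k,s)} = r/(2r) = 1/2$. I will treat this case separately in one line.
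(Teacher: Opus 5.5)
Your proof is correct and follows essentially the paper's argument. You take the inclusion from Proposition III.1, deduce $P_j^\star \widetilde P_j^{(k,s)} = \widetilde P_j^{(k,s)}$ so that the trace equals $\widetilde r_{j\mid k,s}$, show the principal angles vanish, and then read off the result from Proposition III.2; the only differences are cosmetic. You obtain $\theta_i = 0$ from $\sum_{i}\cos^2\theta_i = \widetilde r_{j\mid k,s}$ with each term at most $1$, whereas the paper writes $\widetilde U_j = U_j^\star V$ with $V^\top V = I$; you also state the $\widetilde r_{j\mid k,s} = 0$ case explicitly, which the paper leaves implicit.
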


\begin{proof}
Let $j \neq k$. By Proposition III.1, $\mathcal{R}(\widetilde{P}_j^{(k,s)}) \subseteq \mathcal{R}(P_j^\star)$. For any $x \in \mathbb{R}^{I_j}$, $\widetilde{P}_j^{(k,s)}x \in \mathcal{R}(P_j^\star)$. Since $P_j^\star$ acts as the identity on its range:
\begin{equation}
P_j^\star \left( \widetilde{P}_j^{(k,s)} x \right) = \widetilde{P}_j^{(k,s)} x \implies P_j^\star \widetilde{P}_j^{(k,s)} = \widetilde{P}_j^{(k,s)}.
\end{equation}
Taking the trace on both sides: $\tr(P_j^\star \widetilde{P}_j^{(k,s)}) = \tr(\widetilde{P}_j^{(k,s)}) = \widetilde{r}_{j\mid k,s}$.

Substituting this into Precision and Coverage:
\begin{equation}
R_{j\mid k,s}^{\mathrm{precision}} = \frac{\tr(P_j^\star \widetilde{P}_j^{(k,s)})}{\widetilde{r}_{j\mid k,s}} = \frac{\widetilde{r}_{j\mid k,s}}{\widetilde{r}_{j\mid k,s}} = 1 \quad (\text{for } \widetilde{r}_{j\mid k,s} > 0),
\end{equation}
and
\begin{equation}
R_{j\mid k,s}^{\mathrm{coverage}} = \frac{\tr(P_j^\star \widetilde{P}_j^{(k,s)})}{r_j} = \frac{\widetilde{r}_{j\mid k,s}}{r_j}.
\end{equation}

The containment $\mathcal{R}(\widetilde{P}_j^{(k,s)}) \subseteq \mathcal{R}(P_j^\star)$ implies $\widetilde{r}_{j\mid k,s} \le r_j$. Orthonormal bases $U_j^\star \in \mathbb{R}^{I_j \times r_j}$ and $\widetilde{U}_j \in \mathbb{R}^{I_j \times \widetilde{r}_{j\mid k,s}}$ satisfy $\widetilde{U}_j = U_j^\star V$ for some $V \in \mathbb{R}^{r_j \times \widetilde{r}_{j\mid k,s}}$ with $V^\top V = I_{\widetilde{r}_{j\mid k,s}}$. The cross-basis matrix is $U_j^{\star\top} \widetilde{U}_j = V$. All $\widetilde{r}_{j\mid k,s}$ singular values of $V$ equal 1, implying:
\begin{equation}
\cos\theta_i = 1 \implies \sin^2\theta_i = 0, \quad \forall i=1, \ldots, \widetilde{r}_{j\mid k,s}.
\end{equation}
Thus, $d_{\mathrm{rot},j}^{(k,s)} = \frac{1}{r_j} \sum_{i=1}^{\widetilde{r}_{j\mid k,s}} \sin^2 \theta_i = 0$.

Applying Proposition III.2, $d_j^{(k,s)}$ simplifies to:
\begin{equation}
d_j^{(k,s)} = \frac{r_j - \widetilde{r}_{j\mid k,s}}{2r_j} = \frac{1}{2} \left( 1 - \frac{\widetilde{r}_{j\mid k,s}}{r_j} \right) = \frac{1}{2} \left( 1 - R_{j\mid k,s}^{\mathrm{coverage}} \right),
\end{equation}
which completes the proof of Proposition III.3.
\end{proof}

\section{Proof of Proposition IV.1 (Orthogonal Reconstruction Error Decomposition)}
\label{sec:proof_prop_IV_1}

\begin{proposition}[Orthogonal Reconstruction Error Decomposition]
Let $\mathcal{X}^\star \ne 0$ be a reference tensor with product subspace $\mathcal{S}^\star = \mathcal{U}_1^\star \otimes \cdots \otimes \mathcal{U}_K^\star$, and let $\mathcal{P}^\star$ denote the orthogonal projection operator onto $\mathcal{S}^\star$. For any approximating tensor $\widetilde{\mathcal{X}}$, define the Interaction Loss in projector form as
\begin{equation}
\label{eq:def_interaction_loss_projector}
L_{\mathrm{int}}(\widetilde{\mathcal{X}}) := \frac{\| \mathcal{P}^\star(\mathcal{X}^\star - \widetilde{\mathcal{X}}) \|_F^2}{\| \mathcal{X}^\star \|_F^2}.
\end{equation}
\begin{enumerate}[1)]
\item If $\mathcal{X}^\star \in \mathcal{S}^\star$ (exact Tucker model), then the squared relative reconstruction error orthogonally decomposes into
\begin{equation}
\frac{\| \mathcal{X}^\star - \widetilde{\mathcal{X}} \|_F^2}{\| \mathcal{X}^\star \|_F^2} = L_{\mathrm{int}}(\widetilde{\mathcal{X}}) + L_{\mathrm{out}}(\widetilde{\mathcal{X}}),
\end{equation}
where $L_{\mathrm{out}}(\widetilde{\mathcal{X}}) = \frac{\| (I - \mathcal{P}^\star)\widetilde{\mathcal{X}} \|_F^2}{\| \mathcal{X}^\star \|_F^2}$ is the out-of-subspace energy. Moreover, in this case the Interaction Loss admits the core-tensor representation
\begin{equation}
\label{eq:core_form_interaction_loss}
L_{\mathrm{int}}(\widetilde{\mathcal{X}}) = \frac{\| \mathcal{G}^\star - \widetilde{\mathcal{G}}_\star \|_F^2}{\| \mathcal{G}^\star \|_F^2},
\end{equation}
with $\widetilde{\mathcal{G}}_\star = \widetilde{\mathcal{X}} \times_1 U_1^{\star\top} \cdots \times_K U_K^{\star\top}$.
\item If $\mathcal{X}^\star \notin \mathcal{S}^\star$ (approximate Tucker model), writing $\mathcal{X}^\star = \mathcal{P}^\star\mathcal{X}^\star + \mathcal{R}^\star$ with residual $\mathcal{R}^\star = (I - \mathcal{P}^\star)\mathcal{X}^\star$, the squared error splits orthogonally into interaction error and off-model error:
\begin{equation}
\frac{\| \mathcal{X}^\star - \widetilde{\mathcal{X}} \|_F^2}{\| \mathcal{X}^\star \|_F^2} = L_{\mathrm{int}}(\widetilde{\mathcal{X}}) + L_{\mathrm{off}}(\widetilde{\mathcal{X}}),
\end{equation}
where $L_{\mathrm{off}}(\widetilde{\mathcal{X}}) = \frac{\| \mathcal{R}^\star - (I - \mathcal{P}^\star)\widetilde{\mathcal{X}} \|_F^2}{\| \mathcal{X}^\star \|_F^2}$, and $L_{\mathrm{int}}(\widetilde{\mathcal{X}})$ is understood in the projector form \eqref{eq:def_interaction_loss_projector}. The core-tensor representation \eqref{eq:core_form_interaction_loss} does not hold in this case, since $\| \mathcal{X}^\star \|_F \ne \| \mathcal{G}^\star \|_F$ when $\mathcal{X}^\star \notin \mathcal{S}^\star$.
\end{enumerate}
\end{proposition}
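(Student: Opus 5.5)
The plan rests on the fact that $\mathcal{P}^\star$ is an orthogonal projector on $\mathbb{R}^{I_1\times\cdots\times I_K}$ with the Frobenius inner product. First I will check this. Vectorizing gives $\operatorname{vec}(\mathcal{P}^\star(\mathcal{X})) = (P_K^\star\otimes\cdots\otimes P_1^\star)\operatorname{vec}(\mathcal{X})$. The Kronecker product of symmetric idempotent matrices is again symmetric and idempotent, so $\mathcal{P}^\star$ is self-adjoint and idempotent. Its range is $\mathcal{S}^\star$, and $I-\mathcal{P}^\star$ is the complementary orthogonal projector. For any tensor $\mathcal{D}$ the Pythagorean identity then holds:
\[
\|\mathcal{D}\|_F^2=\|\mathcal{P}^\star\mathcal{D}\|_F^2+\|(I-\mathcal{P}^\star)\mathcal{D}\|_F^2 .
\]
I will apply it with $\mathcal{D}=\mathcal{X}^\star-\widetilde{\mathcal{X}}$ and divide by $\|\mathcal{X}^\star\|_F^2\neq 0$. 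The first term is $L_{\mathrm{int}}(\widetilde{\mathcal{X}})$ by definition.

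For part 2), the second term is $(I-\mathcal{P}^\star)(\mathcal{X}^\star-\widetilde{\mathcal{X}})=\mathcal{R}^\star-(I-\mathcal{P}^\star)\widetilde{\mathcal{X}}$, which is $L_{\mathrm{off}}$. This gives the general identity directly.

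For part 1), I will specialize to $\mathcal{X}^\star\in\mathcal{S}^\star$, so that $\mathcal{R}^\star=0$. The second term becomes $\|(I-\mathcal{P}^\star)\widetilde{\mathcal{X}}\|_F^2$, which is $L_{\mathrm{out}}$.

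The core-tensor form is the only step needing care. I will write $\mathcal{P}^\star(\mathcal{Y})=(\mathcal{Y}\times_1 U_1^{\star\top}\cdots\times_K U_K^{\star\top})\times_1U_1^\star\cdots\times_KU_K^\star$, using $P_j^\star=U_j^\star U_j^{\star\top}$ and the composition rule for mode products. Multiplication by the column-orthonormal $U_K^\star\otimes\cdots\otimes U_1^\star$ preserves Frobenius norm, so $\|\mathcal{P}^\star\mathcal{Y}\|_F=\|\mathcal{Y}\times_1U_1^{\star\top}\cdots\times_KU_K^{\star\top}\|_F$.

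Now take $\mathcal{Y}=\mathcal{X}^\star-\widetilde{\mathcal{X}}$. Since $U_j^{\star\top}U_j^\star=I$, we get $\mathcal{X}^\star\times_jU_j^{\star\top}\,\text{(all }j)=\mathcal{G}^\star$. So the numerator is $\|\mathcal{G}^\star-\widetilde{\mathcal{G}}_\star\|_F^2$. The same orthonormality gives $\|\mathcal{X}^\star\|_F=\|\mathcal{G}^\star\|_F$, which yields the claimed ratio.

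The remark that the core form fails when $\mathcal{X}^\star\notin\mathcal{S}^\star$ follows from $\|\mathcal{X}^\star\|_F^2=\|\mathcal{P}^\star\mathcal{X}^\star\|_F^2+\|\mathcal{R}^\star\|_F^2>\|\mathcal{G}^\star\|_F^2$. Here $\mathcal{G}^\star$ is read as the projected core $\mathcal{X}^\star\times_jU_j^{\star\top}$, and $\mathcal{R}^\star\neq0$ makes the inequality strict.

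I expect the main obstacle to be bookkeeping rather than anything substantive. The vectorization and Kronecker ordering must match the paper's unfolding convention, and the self-adjointness of $\mathcal{P}^\star$ is what makes the cross term vanish.
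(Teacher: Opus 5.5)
Your proposal is correct and follows essentially the same route as the paper: vectorize to see that $\mathcal{P}^\star$ acts as the symmetric idempotent $P_K^\star\otimes\cdots\otimes P_1^\star$, apply the Pythagorean identity to $\mathcal{X}^\star-\widetilde{\mathcal{X}}$, identify the complementary term through $\mathcal{R}^\star$, and obtain the core form and the failure remark from the isometry of $U_K^\star\otimes\cdots\otimes U_1^\star$ together with $\|\mathcal{G}^\star\|_F=\|\mathcal{P}^\star\mathcal{X}^\star\|_F$. The only cosmetic difference is ordering: you prove the Part 2 identity in general and get Part 1 by setting $\mathcal{R}^\star=0$, whereas the paper substitutes into the Pythagorean identity separately in each case.
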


\begin{proof}
Let $\mathcal{P}^\star$ be the projection operator onto $\mathcal{S}^\star = \mathcal{U}_1^\star \otimes \dots \otimes \mathcal{U}_K^\star$, defined by $\mathcal{P}^\star(\mathcal{A}) = \mathcal{A} \times_1 P_1^\star \times_2 \cdots \times_K P_K^\star$ with $P_l^\star = U_l^\star U_l^{\star\top}$. Vectorizing $\mathcal{P}^\star(\mathcal{A})$ yields $\mathbf{P}^\star \operatorname{vec}(\mathcal{A})$, where $\mathbf{P}^\star = P_K^\star \otimes \cdots \otimes P_1^\star$. Since $\mathbf{P}^\star$ is symmetric ($\mathbf{P}^{\star\top} = \mathbf{P}^\star$) and idempotent ($\mathbf{P}^{\star 2} = \mathbf{P}^\star$), $\mathcal{P}^\star$ is an orthogonal projection operator under the Frobenius inner product.

Decomposing $\mathcal{X}^\star - \widetilde{\mathcal{X}} = \mathcal{P}^\star (\mathcal{X}^\star - \widetilde{\mathcal{X}}) + (I - \mathcal{P}^\star) (\mathcal{X}^\star - \widetilde{\mathcal{X}})$:
\begin{align}
&\left\langle \mathcal{P}^\star(\mathcal{X}^\star - \widetilde{\mathcal{X}}), (I - \mathcal{P}^\star)(\mathcal{X}^\star - \widetilde{\mathcal{X}}) \right\rangle_F \nonumber \\
&= \operatorname{vec}(\mathcal{X}^\star - \widetilde{\mathcal{X}})^\top \mathbf{P}^{\star\top} (I - \mathbf{P}^\star) \operatorname{vec}(\mathcal{X}^\star - \widetilde{\mathcal{X}}) = 0.
\end{align}
By the Pythagorean theorem:
\begin{equation}
\label{eq:proof_pythagoras_norm}
\| \mathcal{X}^\star - \widetilde{\mathcal{X}} \|_F^2 = \| \mathcal{P}^\star(\mathcal{X}^\star - \widetilde{\mathcal{X}}) \|_F^2 + \| (I - \mathcal{P}^\star)(\mathcal{X}^\star - \widetilde{\mathcal{X}}) \|_F^2.
\end{equation}

\textbf{Proof of Part 1 (Exact Tucker Model):}
When $\mathcal{X}^\star \in \mathcal{S}^\star$, $(I - \mathcal{P}^\star)\mathcal{X}^\star = 0$, so $(I - \mathcal{P}^\star)(\mathcal{X}^\star - \widetilde{\mathcal{X}}) = -(I - \mathcal{P}^\star)\widetilde{\mathcal{X}}$. Substituting this into \eqref{eq:proof_pythagoras_norm} and dividing by $\| \mathcal{X}^\star \|_F^2$:
\begin{equation}
\frac{\| \mathcal{X}^\star - \widetilde{\mathcal{X}} \|_F^2}{\| \mathcal{X}^\star \|_F^2} = L_{\mathrm{int}}(\widetilde{\mathcal{X}}) + L_{\mathrm{out}}(\widetilde{\mathcal{X}}).
\end{equation}
For the core-tensor representation, note that under the exact Tucker model, $\mathcal{P}^\star(\mathcal{X}^\star - \widetilde{\mathcal{X}}) = (\mathcal{G}^\star - \widetilde{\mathcal{G}}_\star) \times_1 U_1^\star \times_2 \cdots \times_K U_K^\star$. Since all factor matrices $U_l^\star$ are column-orthonormal, the multilinear product is an isometry under the Frobenius norm, yielding $\| \mathcal{P}^\star(\mathcal{X}^\star - \widetilde{\mathcal{X}}) \|_F^2 = \| \mathcal{G}^\star - \widetilde{\mathcal{G}}_\star \|_F^2$ and $\| \mathcal{X}^\star \|_F^2 = \| \mathcal{G}^\star \|_F^2$. Thus, $L_{\mathrm{int}}(\widetilde{\mathcal{X}}) = \frac{\| \mathcal{G}^\star - \widetilde{\mathcal{G}}_\star \|_F^2}{\| \mathcal{G}^\star \|_F^2}$.

\textbf{Proof of Part 2 (Approximate Tucker Model):}
When $\mathcal{X}^\star \notin \mathcal{S}^\star$, write $\mathcal{X}^\star = \mathcal{P}^\star\mathcal{X}^\star + \mathcal{R}^\star$ with residual $\mathcal{R}^\star = (I - \mathcal{P}^\star)\mathcal{X}^\star$. Then $(I - \mathcal{P}^\star)(\mathcal{X}^\star - \widetilde{\mathcal{X}}) = \mathcal{R}^\star - (I - \mathcal{P}^\star)\widetilde{\mathcal{X}}$. Substituting into \eqref{eq:proof_pythagoras_norm} and dividing by $\| \mathcal{X}^\star \|_F^2$ yields:
\begin{equation}
\frac{\| \mathcal{X}^\star - \widetilde{\mathcal{X}} \|_F^2}{\| \mathcal{X}^\star \|_F^2} = L_{\mathrm{int}}(\widetilde{\mathcal{X}}) + L_{\mathrm{off}}(\widetilde{\mathcal{X}}),
\end{equation}
where $L_{\mathrm{off}}(\widetilde{\mathcal{X}}) = \frac{\| \mathcal{R}^\star - (I - \mathcal{P}^\star)\widetilde{\mathcal{X}} \|_F^2}{\| \mathcal{X}^\star \|_F^2}$ quantifies the off-model discrepancy between reference and approximating out-of-subspace components. Here $L_{\mathrm{int}}(\widetilde{\mathcal{X}})$ is the projector-form quantity \eqref{eq:def_interaction_loss_projector}; by the Pythagorean identity $\| \mathcal{X}^\star \|_F^2 = \| \mathcal{P}^\star \mathcal{X}^\star \|_F^2 + \| \mathcal{R}^\star \|_F^2 > \| \mathcal{P}^\star \mathcal{X}^\star \|_F^2$, the normalization by $\| \mathcal{G}^\star \|_F^2 = \| \mathcal{P}^\star \mathcal{X}^\star \|_F^2$ in \eqref{eq:core_form_interaction_loss} would differ from that by $\| \mathcal{X}^\star \|_F^2$, so the core-tensor representation is not valid in the approximate case. This completes the proof of Proposition IV.1.
\end{proof}

\section{Proof of Theorem V.1 (Wedin-Type Stability Bound)}
\label{sec:proof_thm_V_1}

To ensure mathematical rigor, we first prove the explicit perturbation bound for left singular subspace projectors and truncated SVD matrices. Our analysis builds on the classical perturbation theory of singular subspaces due to Wedin \cite{Wedin1972Perturbation}; see also Stewart and Sun \cite[Thm.~V.4.4]{StewartSun1990} for the general $\sin\Theta$ formulation used below.

\begin{lemma}[Wedin-Type Subspace Projector Bound]
\label{lem:wedin_projector_bound}
Let $A \in \mathbb{R}^{m \times n}$ have singular values $\sigma_1(A) \ge \cdots \ge \sigma_s(A) > \sigma_{s+1}(A) \ge 0$ with spectral gap $\gamma_s = \sigma_s(A) - \sigma_{s+1}(A) > 0$. Let $P_s$ denote the orthogonal projector onto the leading $s$ left singular subspace of $A$. For a perturbation matrix $E \in \mathbb{R}^{m \times n}$ with $\|E\|_2 \le \gamma_s / 4$, let $\widehat{A} = A + E$ and let $\widehat{P}_s$ denote the orthogonal projector onto the leading $s$ left singular subspace of $\widehat{A}$. Then $\widehat{\sigma}_s(\widehat{A}) - \widehat{\sigma}_{s+1}(\widehat{A}) \ge \gamma_s / 2 > 0$ so $\widehat{P}_s$ is uniquely defined, and
\begin{equation}
\label{eq:wedin_projector_explicit_bound}
\|\widehat{P}_s - P_s\|_F \le \frac{2\sqrt{2s} \|E\|_2}{\gamma_s}.
\end{equation}
\end{lemma}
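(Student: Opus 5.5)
The plan has three steps: establish the gap statement with Weyl's inequality, then apply Wedin's $\sin\Theta$ theorem, and finally convert the spectral-norm $\sin\Theta$ bound into a Frobenius bound on projector differences.

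\emph{Step 1 (gap preservation).} By Weyl's inequality for singular values, $|\sigma_i(\widehat A)-\sigma_i(A)|\le\|E\|_2$ for every $i$. Hence
\[
\sigma_s(\widehat A)-\sigma_{s+1}(\widehat A)\ge \gamma_s-2\|E\|_2\ge \gamma_s/2>0 .
\]
So the leading $s$-dimensional left singular subspace of $\widehat A$ is well defined, and $\widehat P_s$ is unique.

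\emph{Step 2 (Wedin's $\sin\Theta$ bound).} I would invoke Wedin's theorem in the form of Stewart--Sun, Thm.~V.4.4. Take the separation quantity $\delta=\sigma_s(\widehat A)-\sigma_{s+1}(A)$. This is the gap between the retained perturbed singular values and the discarded unperturbed ones. By Weyl, $\delta\ge\sigma_s(A)-\|E\|_2-\sigma_{s+1}(A)\ge \tfrac34\gamma_s\ge\gamma_s/2$.

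The residuals are $R=\widehat A\widehat V_s-U_s\ldots$; in Wedin's notation they are $\widehat A V_s - U_s\Sigma_s = EV_s$ and $\widehat A^\top U_s - V_s\Sigma_s=E^\top U_s$. Both have spectral norm at most $\|E\|_2$. Wedin's theorem then gives
\[
\max\bigl(\|\sin\Theta(U_s,\widehat U_s)\|_2,\|\sin\Theta(V_s,\widehat V_s)\|_2\bigr)\le \frac{\|E\|_2}{\delta}\le \frac{2\|E\|_2}{\gamma_s}.
\]
One could be less careful and use the $\max(\|R\|,\|S\|)/\delta$ form directly; the factor $2$ absorbs this.

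\emph{Step 3 (Frobenius conversion).} Both projectors have rank $s$. Proposition~III.2 with $r=\widetilde r=s$ then gives
\[
\|\widehat P_s-P_s\|_F^2=2\sum_{i=1}^s\sin^2\theta_i\le 2s\,\|\sin\Theta\|_2^2 .
\]
Taking square roots, $\|\widehat P_s-P_s\|_F\le\sqrt{2s}\,\|\sin\Theta\|_2\le 2\sqrt{2s}\,\|E\|_2/\gamma_s$, which is the claimed bound.

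\emph{Main obstacle.} The delicate point is Step 2: stating Wedin's theorem with the correct separation hypothesis. Wedin requires $\sigma_{\min}$ of the retained part of one matrix to exceed $\sigma_{\max}$ of the discarded part of the other by $\delta>0$, and we must check this mixed-gap condition using Weyl. The condition $\|E\|_2\le\gamma_s/4$ guarantees it with room to spare. Everything after that is bookkeeping through Proposition~III.2.
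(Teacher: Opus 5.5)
Your proposal is correct and takes essentially the paper's route: Weyl's inequality for the gap, Wedin's $\sin\Theta$ theorem with separation at least $3\gamma_s/4$, and Proposition~III.2 to convert principal angles into $\|\widehat P_s-P_s\|_F$. The differences are minor. You use the spectral (max-residual) form of Wedin followed by $\sum_i\sin^2\theta_i\le s\|\sin\Theta\|_2^2$, whereas the paper uses the joint Frobenius form with $\|E\widehat V_s\|_F,\|E^\top\widehat U_s\|_F\le\sqrt{s}\,\|E\|_2$; the paper therefore gets $8\sqrt{s}/3$ where you get the sharper $4\sqrt{2s}/3$, before both are relaxed to $2\sqrt{2s}$. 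Also, your residuals $EV_s$, $E^\top U_s$ strictly pair with the gap $\sigma_s(A)-\sigma_{s+1}(\widehat A)$ rather than $\sigma_s(\widehat A)-\sigma_{s+1}(A)$, which is harmless since both gaps are at least $3\gamma_s/4$.
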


\begin{proof}
Let $\varepsilon = \|E\|_2 \le \gamma_s / 4$. By Weyl's inequality, singular values satisfy $|\widehat{\sigma}_i - \sigma_i| \le \varepsilon$. Thus $\widehat{\sigma}_s \ge \sigma_s - \varepsilon$, and we evaluate Wedin's separation distance $\eta := \min\{ \widehat{\sigma}_s - \sigma_{s+1}, \widehat{\sigma}_s \}$ \cite{Wedin1972Perturbation}, \cite[Thm.~V.4.4]{StewartSun1990}. First:
\begin{equation}
\widehat{\sigma}_s - \sigma_{s+1} \ge (\sigma_s - \varepsilon) - \sigma_{s+1} = \gamma_s - \varepsilon.
\end{equation}
Second, since $\sigma_s = \gamma_s + \sigma_{s+1} \ge \gamma_s$:
\begin{equation}
\widehat{\sigma}_s \ge \sigma_s - \varepsilon \ge \gamma_s - \varepsilon.
\end{equation}
Therefore:
\begin{equation}
\eta = \min\{ \widehat{\sigma}_s - \sigma_{s+1}, \widehat{\sigma}_s \} \ge \gamma_s - \varepsilon \ge \gamma_s - \frac{\gamma_s}{4} = \frac{3\gamma_s}{4} > 0.
\end{equation}

Wedin's sine theorem for left singular subspaces \cite{Wedin1972Perturbation}, \cite[Thm.~V.4.4]{StewartSun1990} states that:
\begin{equation}
\left( \|\sin\Theta_U\|_F^2 + \|\sin\Theta_V\|_F^2 \right)^{1/2}\le \frac{\left( \|E \widehat{V}_s\|_F^2 + \|E^\top \widehat{U}_s\|_F^2 \right)^{1/2}}{\eta},
\end{equation}
where $\widehat{U}_s \in \mathbb{R}^{m \times s}$ and $\widehat{V}_s \in \mathbb{R}^{n \times s}$ are column-orthonormal. Since $\|E \widehat{V}_s\|_F \le \sqrt{s} \|E\|_2 = \sqrt{s}\varepsilon$ and $\|E^\top \widehat{U}_s\|_F \le \sqrt{s}\varepsilon$:
\begin{equation}
\|\sin\Theta_U\|_F \le \frac{\sqrt{2s}\varepsilon}{\eta} \le \frac{\sqrt{2s}\varepsilon}{\frac{3}{4}\gamma_s} = \frac{4\sqrt{2s} \|E\|_2}{3\gamma_s}.
\end{equation}
For orthogonal projectors $P_s = U_s U_s^\top$ and $\widehat{P}_s = \widehat{U}_s \widehat{U}_s^\top$ of equal rank $s$, we have $\|\widehat{P}_s - P_s\|_F = \sqrt{2} \|\sin\Theta_U\|_F$; this identity follows from Proposition III.2 with $r = \widetilde{r} = s$, which gives $\|\widehat{P}_s - P_s\|_F^2 = 2\sum_{i=1}^{s} \sin^2\theta_i = 2\|\sin\Theta_U\|_F^2$. Therefore:
\begin{equation}
\|\widehat{P}_s - P_s\|_F \le \sqrt{2} \cdot \frac{4\sqrt{2s} \|E\|_2}{3\gamma_s} = \frac{8\sqrt{s} \|E\|_2}{3\gamma_s} \le \frac{2\sqrt{2s} \|E\|_2}{\gamma_s},
\end{equation}
where the final inequality holds since $8/3 \approx 2.6667 \le 2\sqrt{2} \approx 2.8284$; we retain the slightly looser constant $2\sqrt{2}$ for notational simplicity in the subsequent bounds. This completes the proof of Lemma~\ref{lem:wedin_projector_bound}.
\end{proof}

\begin{lemma}[Perturbation Bound for Truncated SVD Matrices]
\label{lem:tsvd_bound}
Under the setup of Lemma~\ref{lem:wedin_projector_bound}, the rank-$s$ truncated SVD matrix $\Pi_s(A) = P_s A$ satisfies:
\begin{equation}
\label{eq:tsvd_perturbation_bound}
\|\Pi_s(A+E) - \Pi_s(A)\|_F \le \sqrt{s} \left( 1 + 2\sqrt{2} \frac{\|A\|_2}{\gamma_s} \right) \|E\|_2.
\end{equation}
\end{lemma}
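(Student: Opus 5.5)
The plan is to write $\Pi_s(A) = P_s A$ and $\Pi_s(A+E) = \widehat P_s (A+E)$, where $\widehat P_s$ is the projector onto the leading $s$ left singular subspace of $A+E$. Lemma~\ref{lem:wedin_projector_bound} guarantees that $\widehat P_s$ is uniquely defined under the standing assumption $\|E\|_2 \le \gamma_s/4$. Adding and subtracting $\widehat P_s A$ gives the split
\begin{equation*}
\Pi_s(A+E) - \Pi_s(A) = \widehat P_s E + (\widehat P_s - P_s) A .
\end{equation*}
Each term is then bounded separately in Frobenius norm, and the triangle inequality finishes the argument.

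For the first term, $\widehat P_s$ has rank $s$, so $\widehat P_s E = \widehat U_s (\widehat U_s^\top E)$ with $\widehat U_s^\top E$ an $s$-row matrix. Hence
\begin{equation*}
\|\widehat P_s E\|_F = \|\widehat U_s^\top E\|_F \le \sqrt{s}\,\|\widehat U_s^\top E\|_2 \le \sqrt{s}\,\|E\|_2 .
\end{equation*}
This produces the ``$1$'' inside the parentheses.

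For the second term, use the mixed norm inequality $\|MA\|_F \le \|M\|_F \|A\|_2$ with $M = \widehat P_s - P_s$. Then insert the projector bound $\|\widehat P_s - P_s\|_F \le 2\sqrt{2s}\,\|E\|_2/\gamma_s$ from Lemma~\ref{lem:wedin_projector_bound}. This gives
\begin{equation*}
\|(\widehat P_s - P_s)A\|_F \le 2\sqrt{2}\,\sqrt{s}\,\frac{\|A\|_2}{\gamma_s}\,\|E\|_2 .
\end{equation*}
Summing the two estimates and factoring out $\sqrt{s}\,\|E\|_2$ yields exactly \eqref{eq:tsvd_perturbation_bound}.

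The step that needs the most care is the second term. One must make sure the Frobenius-norm projector bound from Lemma~\ref{lem:wedin_projector_bound}, which carries the $\sqrt{s}$ factor, is paired with the spectral norm of $A$ rather than its Frobenius norm. Otherwise an extra dimension factor appears. One should also check that the rank-$s$ truncation of $A+E$ is unique, which follows from the gap preservation $\widehat\sigma_s - \widehat\sigma_{s+1} \ge \gamma_s/2$ in Lemma~\ref{lem:wedin_projector_bound}. Everything else is routine norm manipulation.
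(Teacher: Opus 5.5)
Your proposal is correct and follows the paper's proof essentially verbatim. It uses the same split $\widehat P_s E + (\widehat P_s - P_s)A$, the same estimate $\|(\widehat P_s - P_s)A\|_F \le \|\widehat P_s - P_s\|_F\|A\|_2$ combined with the projector bound of Lemma~\ref{lem:wedin_projector_bound}, and the same bound $\|\widehat P_s E\|_F \le \sqrt{s}\,\|E\|_2$. The paper writes that last bound as $\|\widehat P_s\|_F\|E\|_2$, while you route it through $\widehat U_s^\top E$; these are the same estimate.
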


\begin{proof}
Decomposing $\Pi_s(A+E) - \Pi_s(A) = \widehat{P}_s (A + E) - P_s A = (\widehat{P}_s - P_s) A + \widehat{P}_s E$:
\begin{align}
&\|\Pi_s(A+E) - \Pi_s(A)\|_F \nonumber \\
&\le \|(\widehat{P}_s - P_s) A\|_F + \|\widehat{P}_s E\|_F \nonumber \\
&\le \|\widehat{P}_s - P_s\|_F \|A\|_2 + \|\widehat{P}_s\|_F \|E\|_2.
\end{align}
Since $\|\widehat{P}_s\|_F = \sqrt{s}$, substituting \eqref{eq:wedin_projector_explicit_bound} from Lemma~\ref{lem:wedin_projector_bound} yields:
\begin{align}
&\|\Pi_s(A+E) - \Pi_s(A)\|_F \nonumber \\
&\le \left( \frac{2\sqrt{2s} \|E\|_2}{\gamma_s} \right) \|A\|_2 + \sqrt{s} \|E\|_2 \nonumber \\
&= \sqrt{s} \left( 1 + 2\sqrt{2} \frac{\|A\|_2}{\gamma_s} \right) \|E\|_2,
\end{align}
which establishes Lemma~\ref{lem:tsvd_bound} with $C_0 = 2\sqrt{2}$.
\end{proof}

We now state and prove Theorem V.1.

\begin{theorem}[Wedin-Type Stability Bound]
Let $P_j^{(r)}(\mathcal X)$ denote the orthogonal projector onto the leading $r$ left singular subspace of the mode-$j$ unfolding $\mathcal X_{(j)}$. Define $D_{j\mid k,s}^\star = \| P_j^{(r_j)}(\mathcal X^\star) - P_j^{(\widetilde r_{j\mid k,s})}(\mathcal F_{k,s}(\mathcal X^\star)) \|_F^2$ and $\widehat D_{j\mid k,s} = \| P_j^{(r_j)}(\widehat{\mathcal X}) - P_j^{(\widetilde r_{j\mid k,s})}(\mathcal F_{k,s}(\widehat{\mathcal X})) \|_F^2$. Let $\delta_j = \sigma_{r_j}(\mathcal X^\star_{(j)}) - \sigma_{r_j+1}(\mathcal X^\star_{(j)}) > 0$, $\widetilde\delta_{j\mid k,s} = \sigma_{\widetilde r_{j\mid k,s}}(\mathcal F_{k,s}(\mathcal X^\star)_{(j)}) - \sigma_{\widetilde r_{j\mid k,s}+1}(\mathcal F_{k,s}(\mathcal X^\star)_{(j)}) > 0$, and $\gamma_{k,s} = \sigma_s(\mathcal X^\star_{(k)}) - \sigma_{s+1}(\mathcal X^\star_{(k)}) > 0$. Assume small-noise perturbation conditions $\|(\widehat{\mathcal X}-\mathcal X^\star)_{(j)}\|_2 \le \delta_j / 4$, $\|(\widehat{\mathcal X}-\mathcal X^\star)_{(k)}\|_2 \le \gamma_{k,s} / 4$, and
\begin{equation}
\label{eq:small_noise_condition_third}
\sqrt{s} \left( 1 + 2\sqrt{2} \frac{\|X_{(k)}^\star\|_2}{\gamma_{k,s}} \right) \|(\widehat{\mathcal X}-\mathcal X^\star)_{(k)}\|_2 \le \frac{\widetilde\delta_{j\mid k,s}}{4}.
\end{equation}
Then:
\begin{equation}
\begin{aligned}
&\left| \widehat D_{j\mid k,s} - D_{j\mid k,s}^\star \right|\\
&\le C \sqrt{r_j} \Biggl[ \frac{\sqrt{r_j} \mathcal{E}_j}{\delta_j} + \frac{\sqrt{\widetilde r_{j\mid k,s}\, s} \left( 1 + 2\sqrt{2} \frac{\|X_{(k)}^\star\|_2}{\gamma_{k,s}} \right) \mathcal{E}_k}{\widetilde\delta_{j\mid k,s}} \Biggr],
\end{aligned}
\end{equation}
where $\mathcal{E}_j = \|(\widehat{\mathcal X}-\mathcal X^\star)_{(j)}\|_2$, $\mathcal{E}_k = \|(\widehat{\mathcal X}-\mathcal X^\star)_{(k)}\|_2$, and $C = 4\sqrt{2}$.
\end{theorem}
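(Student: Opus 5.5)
Write $P=P_j^{(r_j)}(\mathcal X^\star)$, $\widehat P=P_j^{(r_j)}(\widehat{\mathcal X})$, $Q=P_j^{(\widetilde r_{j\mid k,s})}(\mathcal F_{k,s}(\mathcal X^\star))$ and $\widehat Q=P_j^{(\widetilde r_{j\mid k,s})}(\mathcal F_{k,s}(\widehat{\mathcal X}))$, so that $D^\star_{j\mid k,s}=\|P-Q\|_F^2$ and $\widehat D_{j\mid k,s}=\|\widehat P-\widehat Q\|_F^2$. The plan is to linearize the difference of squares:
\begin{equation*}
\bigl|\widehat D_{j\mid k,s}-D^\star_{j\mid k,s}\bigr|
=\bigl|\|\widehat P-\widehat Q\|_F-\|P-Q\|_F\bigr|\,\bigl(\|\widehat P-\widehat Q\|_F+\|P-Q\|_F\bigr).
\end{equation*}
By the reverse triangle inequality, the first factor is at most $\|\widehat P-P\|_F+\|\widehat Q-Q\|_F$. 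For the second factor, any difference of two orthogonal projectors of ranks $r_j$ and $\widetilde r_{j\mid k,s}$ satisfies $\|P-Q\|_F^2\le r_j+\widetilde r_{j\mid k,s}$. In the regime of interest $\widetilde r_{j\mid k,s}\le r_j$ (cf.\ Proposition~III.1), so each norm is at most $\sqrt{2r_j}$ and the second factor is at most $2\sqrt{2}\sqrt{r_j}$. This produces the prefactor $\sqrt{r_j}$. If constants are tight, Proposition~III.2 can be used to control the sum more carefully.

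For the first term, apply Lemma~\ref{lem:wedin_projector_bound} to $A=X^\star_{(j)}$ with rank $r_j$, gap $\delta_j$ and $E=(\widehat{\mathcal X}-\mathcal X^\star)_{(j)}$. The hypothesis $\mathcal E_j\le\delta_j/4$ is exactly what the lemma requires, and it gives $\|\widehat P-P\|_F\le 2\sqrt{2r_j}\,\mathcal E_j/\delta_j$.

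For the second term, the perturbation of the mode-$j$ unfolding of the truncated tensor must be controlled first. Since unfolding is an isometric rearrangement, $\|(\mathcal F_{k,s}(\widehat{\mathcal X})-\mathcal F_{k,s}(\mathcal X^\star))_{(j)}\|_2\le\|\cdot\|_F=\|\Pi_s(\widehat X_{(k)})-\Pi_s(X^\star_{(k)})\|_F$. Lemma~\ref{lem:tsvd_bound}, applied with $A=X^\star_{(k)}$ under $\mathcal E_k\le\gamma_{k,s}/4$, bounds this by $\varepsilon':=\sqrt s\,(1+C_0\|X^\star_{(k)}\|_2/\gamma_{k,s})\,\mathcal E_k$. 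The third small-noise condition says precisely that $\varepsilon'\le\widetilde\delta_{j\mid k,s}/4$. Lemma~\ref{lem:wedin_projector_bound} therefore applies again, now to $A=\mathcal F_{k,s}(\mathcal X^\star)_{(j)}$ with rank $\widetilde r_{j\mid k,s}$ and gap $\widetilde\delta_{j\mid k,s}$, giving $\|\widehat Q-Q\|_F\le 2\sqrt{2\widetilde r_{j\mid k,s}}\,\varepsilon'/\widetilde\delta_{j\mid k,s}$.

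Combining these, the bound becomes $2\sqrt{2r_j}\cdot 2\sqrt 2\,[\sqrt{r_j}\,\mathcal E_j/\delta_j+\sqrt{\widetilde r_{j\mid k,s}}\,\varepsilon'/\widetilde\delta_{j\mid k,s}]$, which has the stated form. The main obstacle is bookkeeping the constant so that it matches $C=4\sqrt2$. The crude product above gives $8\sqrt{r_j}$, so the second factor must be sharpened. My first attempt would be to bound $\|P-Q\|_F+\|\widehat P-\widehat Q\|_F$ by $2\sqrt{r_j}$, using Proposition~III.2 in the case $\widetilde r\le r$, where the distance squared is at most $r-\widetilde r+2\widetilde r$. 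If that fails, I would use the sharper Wedin constant $8/3$ from the lemma's proof instead of $2\sqrt2$.

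A secondary point to check is that the gap $\widetilde\delta_{j\mid k,s}$ at index $\widetilde r_{j\mid k,s}$ is the correct one for Lemma~\ref{lem:wedin_projector_bound}. Because the lemma allows $\sigma_{s+1}=0$, the case $\widetilde r_{j\mid k,s}=\rank$ is covered.
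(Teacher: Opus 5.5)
Your reduction steps match the paper's: Lemma~\ref{lem:tsvd_bound} plus the unfolding isometry give $\varepsilon'\le\widetilde\delta_{j\mid k,s}/4$, and Lemma~\ref{lem:wedin_projector_bound} is then applied to both pairs. The gap is in the final combination. As you note, it gives $C=8$, and neither proposed repair recovers the missing factor $\sqrt2$ needed for $C=4\sqrt2$.

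With $\widetilde r:=\widetilde r_{j\mid k,s}\le r_j$, Proposition~III.2 gives each distance $\|\cdot\|_F^2\le (r_j-\widetilde r)+2\widetilde r=r_j+\widetilde r$. That is exactly your crude bound, so the two norms can still sum to $2\sqrt{2r_j}$, not $2\sqrt{r_j}$. Replacing $2\sqrt2$ by $8/3$ gains only a factor of about $1.06$: $2\sqrt2\cdot 8/3=16\sqrt2/3\approx 7.54>4\sqrt2\approx 5.66$. As written, you prove the bound with $C=8$.

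There are two ways to fix it.

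\textbf{Fix within your factorization.} Bound the clean distance exactly instead of crudely.
\begin{itemize}
\item Since $\widetilde\delta_{j\mid k,s}>0$ forces $\sigma_{\widetilde r}>0$, $\mathcal R(Q)$ lies in the column space of $\mathcal F_{k,s}(\mathcal X^\star)_{(j)}$, hence in $\mathcal R(P)$ by Proposition~III.1.
\item Proposition~III.3 then gives $\|P-Q\|_F^2=r_j-\widetilde r$ exactly.
\item For the noisy pair, $\|\widehat P-\widehat Q\|_F^2=r_j+\widetilde r-2\tr(\widehat P\widehat Q)\le r_j+\widetilde r$.
\item Therefore $\|P-Q\|_F+\|\widehat P-\widehat Q\|_F\le\sqrt{r_j-\widetilde r}+\sqrt{r_j+\widetilde r}\le\sqrt{2\cdot 2r_j}=2\sqrt{r_j}$.
\end{itemize}
Multiplying by the Wedin bound on $\|\widehat P-P\|_F+\|\widehat Q-Q\|_F$ gives exactly $C=4\sqrt2$.

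\textbf{The paper's route.} It avoids the difference-of-squares factorization altogether. Because $P,\widehat P$ both have rank $r_j$ and $Q,\widehat Q$ both have rank $\widetilde r$, the constant terms cancel:
$\widehat D_{j\mid k,s}-D^\star_{j\mid k,s}=2\tr(PQ-\widehat P\widehat Q)=2\tr\bigl(P(Q-\widehat Q)\bigr)+2\tr\bigl((P-\widehat P)\widehat Q\bigr)$.
Cauchy--Schwarz with $\|P\|_F=\sqrt{r_j}$ and $\|\widehat Q\|_F=\sqrt{\widetilde r}\le\sqrt{r_j}$ then gives $2\sqrt{r_j}\bigl(\|P-\widehat P\|_F+\|Q-\widehat Q\|_F\bigr)$ directly. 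This needs no a priori bound on the distances themselves, and it uses only $\widetilde r\le r_j$ rather than the full nesting of the clean pair.
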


\begin{proof}
Let $A = X_{(k)}^\star \in \mathbb{R}^{I_k \times I_{-k}}$ and $E_k = (\widehat{\mathcal X} - \mathcal X^\star)_{(k)} \in \mathbb{R}^{I_k \times I_{-k}}$. Under $\|E_k\|_2 \le \gamma_{k,s}/4$, Weyl's inequality guarantees that singular values of $\widehat{X}_{(k)} = A + E_k$ satisfy $|\sigma_i(A+E_k) - \sigma_i(A)| \le \|E_k\|_2 \le \gamma_{k,s}/4$. The perturbed spectral gap satisfies $\sigma_s(A+E_k) - \sigma_{s+1}(A+E_k) \ge \gamma_{k,s} / 2 > 0$, so $\Pi_s(A+E_k)$ is uniquely defined.

Applying Lemma~\ref{lem:tsvd_bound}:
\begin{equation}
\|\Pi_s(A+E_k) - \Pi_s(A)\|_F \le \sqrt{s} \left( 1 + 2\sqrt{2} \frac{\|X_{(k)}^\star\|_2}{\gamma_{k,s}} \right) \mathcal{E}_k.
\end{equation}
By the isometry of matricization and refolding:
\begin{align}
&\| \left( \mathcal F_{k,s}(\widehat{\mathcal X}) - \mathcal F_{k,s}(\mathcal X^\star) \right)_{(j)} \|_2 \nonumber \\
&\le \| \left( \mathcal F_{k,s}(\widehat{\mathcal X}) - \mathcal F_{k,s}(\mathcal X^\star) \right)_{(j)} \|_F \nonumber \\
&= \|\Pi_s(A+E_k) - \Pi_s(A)\|_F\\
&\le \sqrt{s} \left( 1 + 2\sqrt{2} \frac{\|X_{(k)}^\star\|_2}{\gamma_{k,s}} \right) \mathcal{E}_k\le \frac{\widetilde{\delta}_{j\mid k,s}}{4},
\label{eq:proof_thm1_modej_bound}
\end{align}
where the last inequality is the small-noise condition \eqref{eq:small_noise_condition_third}.
By Weyl's inequality, the perturbed gap of $(\mathcal F_{k,s}(\widehat{\mathcal X}))_{(j)}$ satisfies $\sigma_{\widetilde r}(\widehat{\mathcal F}_{(j)}) - \sigma_{\widetilde r+1}(\widehat{\mathcal F}_{(j)}) \ge \widetilde{\delta}_{j\mid k,s}/2 > 0$, so $\widehat Q = P_j^{(\widetilde r_{j\mid k,s})}(\mathcal F_{k,s}(\widehat{\mathcal X}))$ is uniquely defined. Similarly, for $\widehat{X}_{(j)}$, the spectral gap satisfies $\sigma_{r_j}(\widehat{X}_{(j)}) - \sigma_{r_j+1}(\widehat{X}_{(j)}) \ge \delta_j - 2\mathcal{E}_j \ge \delta_j / 2 > 0$, so $\widehat P = P_j^{(r_j)}(\widehat{\mathcal X})$ is uniquely defined.

Let $P = P_j^{(r_j)}(\mathcal X^\star)$ and $Q = P_j^{(\widetilde r_{j\mid k,s})}(\mathcal F_{k,s}(\mathcal X^\star))$. Applying Lemma~\ref{lem:wedin_projector_bound} to $P, \widehat P$ and $Q, \widehat Q$:
\begin{equation}
\label{eq:proof_wedin_P_safe}
\|P - \widehat P\|_F \le \frac{2\sqrt{2 r_j} \mathcal{E}_j}{\delta_j},
\end{equation}
and
\begin{equation}
\label{eq:proof_wedin_Q_safe}
\|Q - \widehat Q\|_F \le \frac{2\sqrt{2\widetilde r_{j\mid k,s}\, s} \left( 1 + 2\sqrt{2} \frac{\|X_{(k)}^\star\|_2}{\gamma_{k,s}} \right) \mathcal{E}_k}{\widetilde{\delta}_{j\mid k,s}}.
\end{equation}

Since $\rank(\widehat P) = \rank(P) = r_j$ and $\rank(\widehat Q) = \rank(Q) = \widetilde r_{j\mid k,s}$ by construction, we have $\|\widehat P - \widehat Q\|_F^2 = r_j + \widetilde r_{j\mid k,s} - 2\tr(\widehat P \widehat Q)$ and $\|P - Q\|_F^2 = r_j + \widetilde r_{j\mid k,s} - 2\tr(P Q)$. Expanding the trace discrepancy $|\widehat D_{j\mid k,s} - D_{j\mid k,s}^\star| = | \|\widehat P - \widehat Q\|_F^2 - \|P - Q\|_F^2 |$:
\begin{align}
&|\widehat D_{j\mid k,s} - D_{j\mid k,s}^\star| \nonumber \\
&= 2 | \tr(PQ - \widehat P \widehat Q) | \nonumber \\
&= 2 | \tr(P(Q - \widehat Q) + (P - \widehat P)\widehat Q) | \nonumber \\
&\le 2 | \tr(P(Q - \widehat Q)) | + 2 | \tr((P - \widehat P)\widehat Q) |.
\end{align}
Applying Cauchy-Schwarz for trace inner products ($|\tr(X^\top Y)| \le \|X\|_F \|Y\|_F$), together with $\|P\|_F = \sqrt{r_j}$ and $\|\widehat Q\|_F = \sqrt{\widetilde r_{j\mid k,s}}$, and noting that by Proposition III.1, $\mathcal{R}(\widetilde{P}_j^{(k,s)}) \subseteq \mathcal{R}(P_j^\star)$ implies $\widetilde{r}_{j\mid k,s} \le r_j$, and hence $\sqrt{\widetilde r_{j\mid k,s}} \le \sqrt{r_j}$:
\begin{equation}
|\widehat D_{j\mid k,s} - D_{j\mid k,s}^\star| \le 2\sqrt{r_j} \left( \|P - \widehat P\|_F + \|Q - \widehat Q\|_F \right).
\label{eq:proof_trace_comb}
\end{equation}

Substituting \eqref{eq:proof_wedin_P_safe} and \eqref{eq:proof_wedin_Q_safe} into \eqref{eq:proof_trace_comb}:
\begin{align}
&|\widehat D_{j\mid k,s} - D_{j\mid k,s}^\star| \\
&\le 2\sqrt{r_j} \Biggl[ \frac{2\sqrt{2 r_j} \mathcal{E}_j}{\delta_j} + \frac{2\sqrt{2\widetilde r_{j\mid k,s}\, s} \left( 1 + 2\sqrt{2} \frac{\|X_{(k)}^\star\|_2}{\gamma_{k,s}} \right) \mathcal{E}_k}{\widetilde{\delta}_{j\mid k,s}} \Biggr] \nonumber \\
&= 4\sqrt{2}\sqrt{r_j} \Biggl[ \frac{\sqrt{r_j} \mathcal{E}_j}{\delta_j} + \frac{\sqrt{\widetilde r_{j\mid k,s}\, s} \left( 1 + 2\sqrt{2} \frac{\|X_{(k)}^\star\|_2}{\gamma_{k,s}} \right) \mathcal{E}_k}{\widetilde{\delta}_{j\mid k,s}} \Biggr].
\end{align}
This completes the proof of Theorem V.1 with $C = 4\sqrt{2}$.
\end{proof}

\end{document}